\title{Discriminative Adversarial Domain Adaptation }
\author{Hui Tang, Kui Jia\thanks{Corresponding author.}\\ 
South China University of Technology\\
eehuitang@mail.scut.edu.cn, kuijia@scut.edu.cn 
}
\begin{document}

\maketitle

\begin{abstract}
Given labeled instances on a source domain and unlabeled ones on a target domain, unsupervised domain adaptation aims to learn a task classifier that can well classify target instances. Recent advances rely on domain-adversarial training of deep networks to learn domain-invariant features. However, due to an issue of mode collapse induced by the separate design of task and domain classifiers, these methods are limited in aligning the joint distributions of feature and category across domains. To overcome it, we propose a novel adversarial learning method termed Discriminative Adversarial Domain Adaptation (DADA). Based on an integrated category and domain classifier, DADA has a novel adversarial objective that encourages a mutually inhibitory relation between category and domain predictions for any input instance. We show that under practical conditions, it defines a minimax game that can promote the joint distribution alignment. Except for the traditional closed set domain adaptation, we also extend DADA for extremely challenging problem settings of partial and open set domain adaptation. Experiments show the efficacy of our proposed methods and we achieve the new state of the art for all the three settings on benchmark datasets.
\end{abstract}

\section{Introduction}

Many machine learning tasks are advanced by large-scale learning of deep models, with image classification \cite{imagenet} as one of the prominent examples. A key factor to achieve such advancements is the availability of massive labeled data on the domains of the tasks of interest. For many other tasks, however, training instances on the corresponding domains are either difficult to collect, or their labeling costs prohibitively. To address the scarcity of labeled data for these \emph{target} tasks/domains, a general strategy is to leverage the massively available labeled data on related \emph{source} ones via domain adaptation \cite{tl_survey}. Even though the source and target tasks share the same label space (i.e. closed set domain adaptation), domain adaptation still suffers from the shift in data distributions. The main objective of domain adaptation is thus to learn domain-invariant features, so that task classifiers learned from the source data can be readily applied to the target domain. In this work, we focus on the unsupervised setting where training instances on the target domain are completely unlabeled.

Recent domain adaptation methods are largely built on modern deep architectures. They rely on great model capacities of these networks to learn hierarchical features that are empirically shown to be more transferable across domains \cite{transferability_theory,metafgnet}. Among them, those based on domain-adversarial training \cite{dann,tada} achieve the current state of the art. Based on the seminal work of DANN \cite{dann}, they typically augment a classification network with an additional domain classifier. The domain classifier takes features from the feature extractor of the classification network as inputs, which is trained to differentiate between instances from the two domains. By playing a minimax game \cite{gans}, adversarial training aims to learn domain-invariant features.

Such domain-adversarial networks can largely reduce the domain discrepancy. However, the separate design of task and domain classifiers has the following shortcomings. Firstly, feature distributions can only be aligned to a certain level, since model capacity of the feature extractor could be large enough to compensate for the less aligned feature distributions. More importantly, given practical difficulties of aligning the source and target distributions with high granularity to the category level (especially for complex distributions with multi-mode structures), the task classifier obtained by minimizing the empirical source risk cannot well generalize to the target data due to an issue of mode collapse \cite{idda,dann_ca}, i.e., the joint distributions of feature and category are not well aligned across the source and target domains.

Recent methods \cite{idda,dann_ca} take the first step to address the above shortcomings by jointly parameterizing the task and domain classifiers into an integrated one. To further push this line, based on such a classifier, we propose a novel adversarial learning method termed \emph{Discriminative Adversarial Domain Adaptation (DADA)}, which encourages a \emph{mutually inhibitory} relation between its domain prediction and category prediction for any input instance, as illustrated in Figure \ref{fig:dada}. This discriminative interaction between category and domain predictions underlies the ability of DADA to reduce domain discrepancy at both the feature and category levels. Intuitively, the adversarial training of DADA mainly conducts competition between the domain neuron (output) and the true category neuron (output). Different from the work \cite{dann_ca} whose mechanism to align the joint distributions is rather implicit, DADA enables explicit alignment between the joint distributions, thus improving the classification of target data. Except for closed set domain adaptation, we also extend DADA for partial domain adaptation \cite{pada}, i.e. the target label space is subsumed by the source one, and open set domain adaptation \cite{bp_for_os}, i.e. the source label space is subsumed by the target one. 
Our main contributions can be summarized as follows.
\begin{itemize}
	\item We propose in this work a novel adversarial learning method, termed DADA, for closed set domain adaptation. Based on an integrated category and domain classifier, DADA has a novel adversarial objective that encourages a \emph{mutually inhibitory} relation between category and domain predictions for any input instance, which can promote the joint distribution alignment across domains.
	
	\item For more realistic partial domain adaptation, we extend DADA by a reliable category-level weighting mechanism, termed DADA-P, which can significantly reduce the negative influence of outlier source instances. 
	
	\item For more challenging open set domain adaptation, we extend DADA by balancing the joint distribution alignment in the shared label space with the classification of outlier target instances, termed DADA-O.
	
	\item Experiments show the efficacy of our proposed methods and we achieve the new state of the art for all the three adaptation settings on benchmark datasets. 
\end{itemize}

\begin{figure*}[t]
	\centering
	\includegraphics[width=0.89\textwidth]{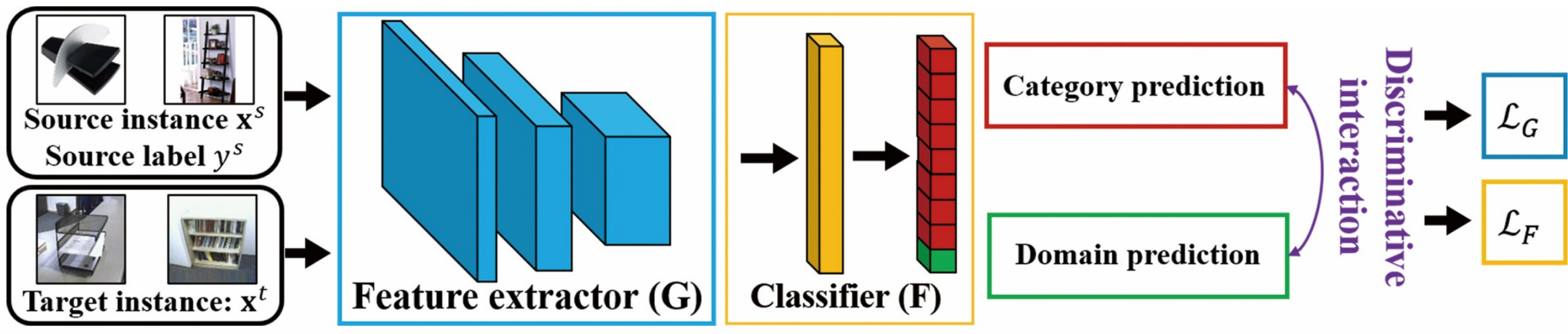}
	\caption{(Best viewed in color.) Discriminative Adversarial Domain Adaptation (DADA), which includes a feature extractor $G(\cdot)$ and an integrated category and domain classifier $F(\cdot)$. The blue and orange colors denote $G(\cdot)$ and $F(\cdot)$, and the losses applied to them, respectively. Note that DADA explicitly establishes a discriminative interaction between category and domain predictions. Please refer to the main text for how the adversarial training objective of DADA is defined.
	}
	\label{fig:dada} 
\end{figure*}

\section{Related Works}
\noindent\textbf{Closed Set Domain Adaptation} After the seminal work of DANN \cite{dann}, ADDA \cite{adda} proposes an untied weight sharing strategy to align the target feature distribution to a fixed source one. SimNet \cite{SimNet} replaces the standard FC-based cross-entropy classifier by a similarity-based one. MADA \cite{mada} and CDAN \cite{cdan} integrate the discriminative category information into domain-adversarial training. VADA \cite{dirt_t} reduces the cluster assumption violation to constrain domain-adversarial training. Some methods \cite{tada,hla} focus on transferable regions to learn domain-invariant features and task classifier. TAT \cite{tat} enhances the discriminability of features to guarantee the adaptability. Some methods \cite{mcd,adr,swd} utilize category predictions from two task classifiers to measure the domain discrepancy. The most related works \cite{idda,dann_ca} to us propose joint parameterization of the task and domain classifiers, which implicitly align the joint distributions. Differently, our proposed DADA makes the joint distribution alignment more explicit, thus promoting classification on the target domain.

\noindent\textbf{Partial Domain Adaptation} The work \cite{iwan} weights each source instance by its importance to the target domain based on one domain classifier, and then trains another domain classifier on target and weighted source instances. The works \cite{san,pada} reduce the contribution of outlier source instances to the task or domain classifiers by utilizing category predictions. Differently, DADA-P weights the proposed source discriminative adversarial loss by a reliable category confidence.

\noindent\textbf{Open Set Domain Adaptation} Previous research \cite{open_set_svm} proposes to reject an instance as the unknown category by threshold filtering. The work \cite{bp_for_os} proposes to utilize adversarial training for both domain adaptation and unknown outlier detection. Differently, DADA-O balances the joint distribution alignment in the shared label space with the outlier rejection.

\section{Method}
Given $\{\left(\mathbf{x}_i^s, y_i^s\right)\}_{i=1}^{n_s}$ of labeled instances sampled from the source domain ${\cal{D}}_s$, and $\{\mathbf{x}_j^t\}_{j=1}^{{n}_t}$ of unlabeled instances sampled from the target domain ${\cal{D}}_t$, the objective of unsupervised domain adaptation is to learn a feature extractor $G(\cdot)$ and a task classifier $C(\cdot)$ such that the expected target risk ${\mathbb{E}}_{(\mathbf{x}^t, y^t)\sim {\cal{D}}_t}[{\cal{L}}_{cls}(C(G(\mathbf{x}^t)), y^t)]$ is low for a certain classification loss function ${\cal{L}}_{cls}(\cdot)$. The domains ${\cal{D}}_s$ and ${\cal{D}}_t$ are assumed to have different distributions. To achieve a low target risk, a typical strategy is to learn $G(\cdot)$ and $C(\cdot)$ by minimizing the sum of the source risk and some notion of \emph{distance} between the source and target domain distributions, inspired by domain adaptation theories \cite{da_theory1,da_theory2}. This strategy is based on a simple rational that the source risk would become a good indicator of the target risk when the distance between the two distributions is getting closer. While most of existing methods use distance measures based on the marginal distributions, it is arguably better to use those based on the joint distributions.

The above strategy is generally implemented by domain-adversarial learning \cite{dann,tada}, where separate task classifier $C(\cdot)$ and domain classifier $D(\cdot)$ are typically stacked on top of the feature extractor $G(\cdot)$.  As discussed before, this type of design has the following shortcomings: (1) model capacity of $G(\cdot)$ could be large enough to make $D(G(\mathbf{x}^s))$ and $D(G(\mathbf{x}^t))$ hardly differentiable for any instance, even though the marginal feature distributions are not well aligned; (2) more importantly, it is difficult to align the source and target distributions with high granularity to the category level (especially for complex distributions with multi-mode structures), and thus $C(\cdot)$ obtained by minimizing the empirical source risk cannot perfectly generalize to the target data due to an issue of mode collapse, i.e. the joint distributions are not well aligned.

To alleviate the above shortcomings, inspired by semi-supervised learning methods based on GANs \cite{gan_tech1,gan_tech2}, the recent work \cite{dann_ca} proposes joint parameterization of $C(\cdot)$ and $D(\cdot)$ into an integrated one $F(\cdot)$. Suppose the classification task of interest has $K$ categories, $F(\cdot)$ is formed simply by augmenting the last FC layer of $C(\cdot)$ with one additional neuron. 

Denote $\mathbf{p}(\mathbf{x}) \in [0, 1]^{K + 1}$ as the output vector of class probabilities of $F(G(\mathbf{x}))$ for an instance $\mathbf{x}$, and $p_k(\mathbf{x})$, $k \in \{1, \dots, K+1\}$, as its $k^{th}$ element. The $k^{th}$ element of the conditional probability vector $\bar{\mathbf{p}}(\mathbf{x})$ is written as follows 
\begin{equation}
\bar{p}_k(\mathbf{x}) = \left\{  
\begin{aligned}
&\frac{p_k(\mathbf{x})}{1-p_{K+1}(\mathbf{x})}, \quad k=1, 2, ..., K  \\
&\qquad\quad 0 \:\qquad, \quad k= K+1
\end{aligned}  
\right. .
\end{equation}
For ease of subsequent notations, we also write $p_k^s = p_k(\mathbf{x}^s)$ and $p_k^t = p_k(\mathbf{x}^t)$. Then, such a network is trained by the classification-aware adversarial learning objective 
\begin{align}
& \min_{F} - \frac{1}{n_s} \sum_{i=1}^{n_s} \log p_{y_i^s}(\mathbf{x}_i^s) - \frac{1}{n_t} \sum_{j=1}^{n_t} \log p_{K+1}(\mathbf{x}_j^t) \label{EqnDANN_CA} \\
& \max_{G} \frac{1}{n_s} \sum_{i=1}^{n_s} \log \bar{p}_{y_i^s}(\mathbf{x}_i^s) + \lambda \frac{1}{n_t} \sum_{j=1}^{n_t} \log (1 - p_{K+1}(\mathbf{x}_j^t)) , \nonumber
\end{align}
where $\lambda$ balances category classification and domain adversarial losses. The mechanism of this objective to align the joint distributions across domains is rather implicit. 

To make it more explicit, based on the integrated classifier $F(\cdot)$, we propose a novel adversarial learning method termed \emph{Discriminative Adversarial Domain Adaptation (DADA)}, which explicitly enables a discriminative interplay of predictions among the domain and $K$ categories for any input instance, as illustrated in Figure \ref{fig:dada}. This discriminative interaction underlies the ability of DADA to promote the joint distribution alignment, as explained shortly.

\subsection{Discriminative Adversarial Learning}
To establish a direct interaction between category and domain predictions, we propose a novel source discriminative adversarial loss that is tailored to the design of the integrated classifier $F(\cdot)$. The proposed loss is inspired by the principle of binary cross-entropy loss. It is written as
\begin{eqnarray}\label{EqnDAdversarialLossOnSource}
\begin{aligned}
{\cal{L}}^s (G, F) = - \frac{1}{n_s} & \sum_{i=1}^{n_s} [\left(1 - p_{K+1}(\mathbf{x}_i^s)\right)\log p_{y_i^s}(\mathbf{x}_i^s) \\ & +  p_{K+1}(\mathbf{x}_i^s)\log\left(1 - p_{y_i^s}(\mathbf{x}_i^s)\right)] .
\end{aligned}
\end{eqnarray}
Intuitively, the proposed loss (\ref{EqnDAdversarialLossOnSource}) establishes a mutually inhibitory relation between $p_{y^s}(\mathbf{x}^s)$ of the prediction on the true category of $\mathbf{x}^s$, and $p_{K+1}(\mathbf{x}^s)$ of the prediction on the domain of $\mathbf{x}^s$. We first discuss how the proposed loss (\ref{EqnDAdversarialLossOnSource}) works during adversarial training, and we show that under practical conditions, minimizing (\ref{EqnDAdversarialLossOnSource}) over the classifier $F(\cdot)$ has the effects of discriminating among task categories while distinguishing the source domain from the target one, and maximizing (\ref{EqnDAdversarialLossOnSource}) over the feature extractor $G(\cdot)$ can discriminatively align the source domain to the target one.

\noindent\textbf{Discussion} We first write the gradient formulas of ${\cal{L}}^s$ on any source instance $\mathbf{x}^s$ w.r.t. $p^s_{y^s}$ and $p^s_{K+1}$ as
\begin{align}
\nabla_{p_{y^s}^s} = \frac{\partial {\cal{L}}^s}{\partial p^s_{y^s}} \nonumber = \frac{p^s_{y^s}p^s_{K+1} - (1-p^s_{y^s})(1-p^s_{K+1})}{p^s_{y^s}(1 - p^s_{y^s})}, \nonumber
\end{align}
\begin{align}
\nabla_{p_{K+1}^s} = \frac{\partial {\cal{L}}^s}{\partial p^s_{K+1}} = \log \frac{p^s_{y^s}}{1-p^s_{y^s}}. \nonumber
\end{align}
Since both $p_{y^s}^s$ and $p_{K+1}^s$ are among the $K+1$ output probabilities of the classifier $F(G(\mathbf{x}^s))$, we always have $p_{y^s}^s \leq 1 - p_{K+1}^s$ and $p_{K+1}^s \leq 1 - p_{y^s}^s$, suggesting $\nabla_{p_{y^s}^s} \leq 0$. When the loss (\ref{EqnDAdversarialLossOnSource}) is minimized over $F(\cdot)$ via stochastic gradient descent (SGD), we have the update $p_{y^s}^s \leftarrow p_{y^s}^s - \eta \nabla_{p_{y^s}^s}$ where $\eta$ is the learning rate, and since $\nabla_{p_{y^s}^s} \leq 0$, $p_{y^s}^s$ increases; when it is maximized over $G(\cdot)$ via stochastic gradient ascent (SGA), we have the update $p_{y^s}^s \leftarrow p_{y^s}^s + \eta \nabla_{p_{y^s}^s}$, and since $\nabla_{p_{y^s}^s} \leq 0$,  $p_{y^s}^s$ decreases. Then, we discuss the change of $p_{K+1}^s$ in two cases: (1) in case of $p_{y^s}^s > 0.5$ that guarantees $\nabla_{p_{K+1}^s}>0$, when minimizing the loss (\ref{EqnDAdversarialLossOnSource}) over $F(\cdot)$ by SGD update $p_{K+1}^s \leftarrow p_{K+1}^s - \eta \nabla_{p_{K+1}^s}$, we have decreased $p_{K+1}^s$, and when maximizing it over $G(\cdot)$ by SGA update $p_{K+1}^s \leftarrow p_{K+1}^s + \eta \nabla_{p_{K+1}^s}$, we have increased $p_{K+1}^s$; (2) in case of $p_{y^s}^s < 0.5$ that guarantees $\nabla_{p_{K+1}^s}<0$, when minimizing the loss (\ref{EqnDAdversarialLossOnSource}) over $F(\cdot)$ by SGD update, we have increased $p_{K+1}^s$, and when maximizing it over $G(\cdot)$ by SGA update, we have decreased $p_{K+1}^s$, as shown in Figure \ref{fig:see_change}.

\begin{figure}[!ht]
	\centering
	\includegraphics[width=0.95\columnwidth]{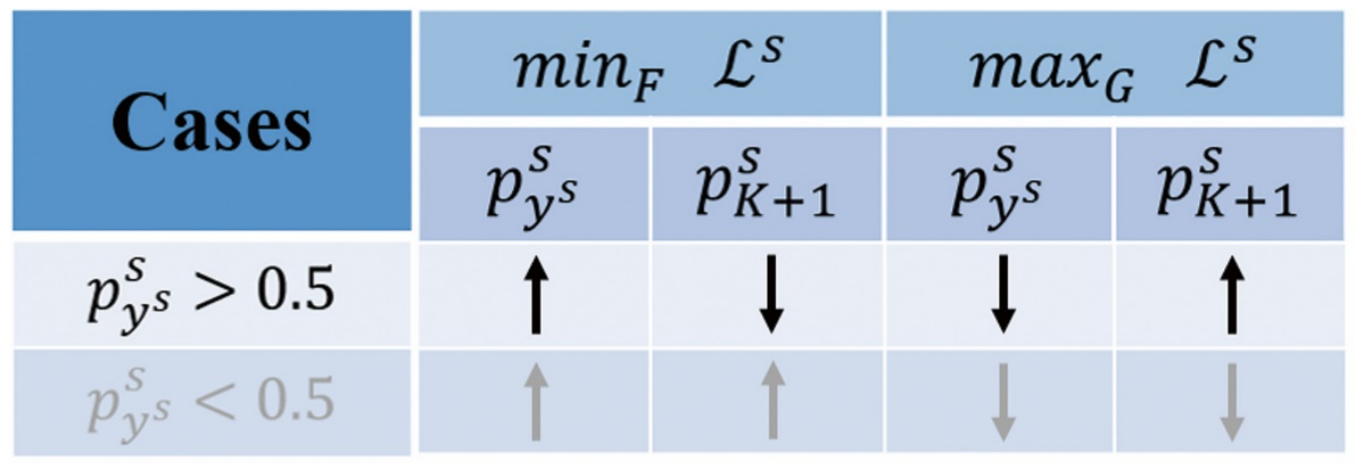}
	\caption{Changes of $p_{y^s}^s$ and $p_{K+1}^s$ when minimizing and maximizing the loss (\ref{EqnDAdversarialLossOnSource}) in the two cases.}
	\label{fig:see_change}
\end{figure}

For discriminative adversarial domain adaptation, we expect that (1) when minimizing the proposed loss (\ref{EqnDAdversarialLossOnSource}) over $F(\cdot)$, task categories of the source domain is discriminative and the source domain is distinctive from the target one, which can be achieved when $p_{y^s}^s$ increases and $p_{K+1}^s$ decreases; (2) when maximizing it over $G(\cdot)$, the source domain is aligned to the target one while retains discriminability, which can be achieved when $p_{y^s}^s$ decreases and $p_{K+1}^s$ increases in the case of $p_{y^s}^s>0.5$. To meet the expectations, the condition of $p_{y^s}^s>0.5$ for all source instances should be always satisfied. This is practically achieved by pre-training DADA on the labeled source data using a $K$-way cross-entropy loss, and maintaining in the adversarial training of DADA the same supervision signal. We present in the supplemental material empirical evidence on benchmark datasets that shows the efficacy of our used scheme. 

To achieve the joint distribution alignment, the explicit interplay between category and domain predictions for any target instance should also be created. Motivated by recent works \cite{mada,cdan} which alleviate the issue of mode collapse by aligning each instance to several most related categories, we propose a target discriminative adversarial loss based on the design of the integrated classifier $F(\cdot)$, by using the conditional category probabilities to weight the domain predictions. It is written as 
\begin{eqnarray}\label{EqnDAdversarialLossOnTarget}
\begin{aligned}
&{\cal{L}}^t_{F} (G, F) =  - \frac{1}{n_t} \sum_{j=1}^{n_t} \sum_{k=1}^K \bar{p}_k(\mathbf{x}_j^t) \log \hat{p}_{K+1}^k (\mathbf{x}_j^t) \\
&{\cal{L}}^t_{G} (G, F) =  \frac{1}{n_t} \sum_{j=1}^{n_t} \sum_{k=1}^K \bar{p}_k(\mathbf{x}_j^t) \log (1-\hat{p}_{K+1}^k (\mathbf{x}_j^t)), 
\end{aligned}
\end{eqnarray}
where the ${k'}^{th}$ element of the domain prediction vector $\hat{\mathbf{p}}^{k}$ for the $k^{th}$ category is written as follows
\begin{equation}
\hat{p}_{k'}^{k}(\mathbf{x}) = \left\{  
\begin{aligned}
&\frac{p_{k'}(\mathbf{x})}{p_{k}(\mathbf{x})+p_{K+1}(\mathbf{x})}, \quad k'=k, K+1  \\
&\qquad\quad 0 \qquad\qquad, \quad {\rm otherwise}
\end{aligned}  
\right. .
\end{equation}
An intuitive explanation for our proposed (\ref{EqnDAdversarialLossOnTarget}) is provided in the supplemental material.

Established knowledge from cluster analysis \cite{fano_ineq} indicates that we can estimate clusters with a low probability of error only if the conditional entropy is small. To this end, we adopt the entropy minimization principle \cite{em}, which is written as 
\begin{eqnarray}\label{EqnEMlLoss}
{\cal{L}}_{em}^t (G, F) = \frac{1}{n_t} \sum^{n_t}_{j=1} {\cal{H}}(\bar{\mathbf{p}}(\mathbf{x}^t_j)),
\end{eqnarray}
where ${\cal{H}}(\cdot)$ computes the entropy of a probability vector. Combining (\ref{EqnDAdversarialLossOnSource}), (\ref{EqnDAdversarialLossOnTarget}), and (\ref{EqnEMlLoss}) gives the following minimax problem of our proposed DADA 
\begin{eqnarray}\label{EqnOverallLoss}
\begin{aligned}
&\min_{F} {\cal{L}}_F = \lambda ({\cal{L}}^s + {\cal{L}}^t_{F}) - {\cal{L}}_{em}^t \\
&\max_{G} {\cal{L}}_G = \lambda ({\cal{L}}^s + {\cal{L}}^t_{G}) - {\cal{L}}_{em}^t ,
\end{aligned}
\end{eqnarray}
where $\lambda$ is a hyper-parameter that trade-offs the adversarial domain adaptation objective with the entropy minimization one in the unified optimization problem. Note that in the minimization problem of (\ref{EqnOverallLoss}), ${\cal{L}}_{em}^t$ serves as a regularizer for learning $F(\cdot)$ to avoid the trivial solution (i.e. all instances are assigned to the same category), and in the maximization problem of (\ref{EqnOverallLoss}), it helps learn more target-discriminative features, which can alleviate the negative effect of adversarial feature adaptation on the adaptability \cite{tat}.

By optimizing (\ref{EqnOverallLoss}), the joint distribution alignment can be enhanced. This ability comes from the better use of discriminative information from both the source and target domains. Concretely, DADA constrains the domain classifier so that it clearly/explicitly knows the classification boundary, thus reducing false alignment between different categories. By deceiving such a strong domain classifier, DADA can learn a feature extractor that better aligns the two domains. \emph{We also theoretically prove in the supplemental material that DADA can better bound the expected target error.}

\subsection{Extension for Partial Domain Adaptation}
Partial domain adaptation is a more realistic setting, where the target label space is subsumed by the source one. The false alignment between the outlier source categories and the target domain is unavoidable. To address it, existing methods \cite{san,iwan,pada} utilize the category or domain predictions, to decrease the contribution of source outliers to the training of task or domain classifiers. Inspired by these ideas, we extend DADA for partial domain adaptation by using a reliable category-level weighting mechanism, which is termed DADA-P. 

Concretely, we average the conditional probability vectors $\bar{\mathbf{p}}(\mathbf{x}^t) \in [0, 1]^K$ over all target data and then normalize the averaged vector $\bar{\mathbf{c}} \in [0, 1]^K$ by dividing its largest element. The category weight vector $\mathbf{c} \in [0, 1]^K$ with $c_k$ as its $k^{th}$ element is derived by a convex combination of the normalized vector and an all-ones vector $\mathbf{1}$, as follows 
\begin{eqnarray}\label{EqnAvgCateWeightOverTarget}
\begin{aligned}
\bar{\mathbf{c}} &= \frac{1}{n_t} \sum_{j=1}^{n_t} \bar{\mathbf{p}}(\mathbf{x}_j^t) \\
\mathbf{c} &= \lambda \frac{\bar{\mathbf{c}}}{\max(\bar{\mathbf{c}})} + (1 - \lambda) \mathbf{1}, 
\end{aligned}
\end{eqnarray}
where $\lambda \in [0,1]$ is to suppress the detection noise of outlier source categories in the early stage of training. Then, we apply the category weight vector $\mathbf{c}$ to the proposed discriminative adversarial loss for any source instance, leading to 
\begin{align}
{\cal{L}}^s (G, F) = - \frac{1}{n_s} & \sum_{i=1}^{n_s} c_{y_i^s}[\left(1 - p_{K+1}(\mathbf{x}_i^s)\right)\log p_{y_i^s}(\mathbf{x}_i^s) \nonumber \\ & + p_{K+1}(\mathbf{x}_i^s)\log\left(1 - p_{y_i^s}(\mathbf{x}_i^s)\right)]. \label{EqnWDAdversarialLossOnSource}
\end{align}

Since predicted probabilities on the outlier source categories are more likely to increase when minimizing $-{\cal{L}}_{em}^t$ over $F(\cdot)$, which incurs negative transfer. To avoid it, we minimize ${\cal{L}}_{em}^t$ over $F(\cdot)$ and the objective of DADA-P is 
\begin{eqnarray}\label{EqnPartialOverallLoss}
\begin{aligned}
&\min_{F} {\cal{L}}_F = \lambda ({\cal{L}}^s + {\cal{L}}^t_{F}) + {\cal{L}}_{em}^t \\
&\max_{G} {\cal{L}}_G = \lambda ({\cal{L}}^s + {\cal{L}}^t_{G}) - {\cal{L}}_{em}^t. 
\end{aligned}
\end{eqnarray}
By optimizing it, DADA-P can simultaneously alleviate negative transfer and promote the joint distribution alignment across domains in the shared label space.

\subsection{Extension for Open Set Domain Adaptation}
Open set domain adaptation is a very challenging setting, where the source label space is subsumed by the target one. We denominate the shared category and all unshared categories between the two domains as the ``known category" and ``unknown category" respectively. The goal of open set domain adaptation is to correctly classify any target instance as the known or unknown category. The false alignment between the known and unknown categories is inevitable. To this end, the work \cite{bp_for_os} proposes to make a pseudo decision boundary for the unknown category, which enables the feature extractor to reject some target instances as outliers. Inspired by this work, we extend DADA for open set domain adaptation by training the classifier to classify all target instances as the unknown category with a small probability $q$, which is termed DADA-O. Assuming the predicted probability on the unknown category as the $K^{th}$ element of $\mathbf{p}(\mathbf{x}^t)$, i.e., $p_K(\mathbf{x}^t)$, the modified target adversarial loss when minimized over the integrated classifier $F(\cdot)$ is 
\begin{eqnarray}\label{EqnDomainAdversarialLossOnTargetForOpenSet}
\begin{aligned}
&{\cal{L}}^t_{F} (G, F) = \\ & - \frac{1}{n_t} \sum_{j=1}^{n_t} q \log p_K(\mathbf{x}_j^t) - (1 - q) \log p_{K+1}(\mathbf{x}_j^t), 
\end{aligned}
\end{eqnarray}
where $0 < q < 0.5$. When maximized over the feature extractor $G(\cdot)$, we still use the discriminative loss ${\cal{L}}_G^t$ in (\ref{EqnDAdversarialLossOnTarget}). Replacing ${\cal{L}}_F^t$ in (\ref{EqnOverallLoss}) with (\ref{EqnDomainAdversarialLossOnTargetForOpenSet}) gives the overall adversarial objective of DADA-O, which can achieve a balance between domain adaptation and outlier rejection.

We utilize all target instances to obtain the concept of ``unknown'', which is very helpful for the classification of unknown target instances as the unknown category but can cause the misclassification of known target instances as the unknown category. This issue can be alleviated by selecting an appropriate $q$. If $q$ is too small, the unknown target instances cannot be correctly classified; if $q$ is too large, the known target instances can be misclassified. By choosing an appropriate $q$, the feature extractor can separate the unknown target instances from the known ones while aligning the joint distributions in the shared label space.

\section{Experiments}

\subsection{Datasets and Implementation Details}
\noindent\textbf{Office-31} \cite{office31} is a popular benchmark domain adaptation dataset consisting of $4,110$ images of $31$ categories collected from three domains: Amazon (\textbf{A}), Webcam (\textbf{W}), and DSLR (\textbf{D}). We evaluate on six settings.

\noindent\textbf{Syn2Real} \cite{visda} is the largest benchmark. Syn2Real-C has over $280K$ images of $12$ shared categories in the combined training, validation, and testing domains. The $152,397$ images on the training domain are synthetic ones by rendering 3D models. The validation and test domains comprise real images, and the validation one has $55,388$ images. We use the training domain as the source domain and validation one as the target domain. For partial domain adaptation, we choose images of the first $6$ categories (in alphabetical order) in the validation domain as the target domain and form the setting: \textbf{Synthetic 12} $\rightarrow$ \textbf{Real 6}. For open set domain adaptation, we evaluate on Syn2Real-O, which includes two domains. The training/synthetic domain uses synthetic images from the $12$ categories of Syn2Real-C as ``known''. The validation/real domain uses images of the $12$ categories from the validation domain of Syn2Real-C as ``known'', and $50$k images from $69$ other categories as ``unknown''. We use the training and validation domains of Syn2Real-O as the source and target domains respectively. 

\noindent\textbf{Implementation Details} We follow standard evaluation protocols for unsupervised domain adaptation \cite{dann,tada}: we use all labeled source and all unlabeled target instances as the training data. For all tasks of Office-31 and \textbf{Synthetic 12} $\rightarrow$ \textbf{Real 6}, based on ResNet-50 \cite{resnet}, we report the classification result on the target domain of mean($\pm$standard deviation) over three random trials. For other tasks of Syn2Real, we evaluate the accuracy of each category based on ResNet-101 and ResNet-152 (for closed and open set domain adaptation respectively). For each base network, we use all its layers up to the second last one as the feature extractor $G(\cdot)$, and set the neuron number of its last FC layer as $K+1$ to have the integrated classifier $F(\cdot)$. Exceptionally, we follow the work \cite{visda} and replace the last FC layer of ResNet-152 with three FC layers of 512 neurons. All base networks are pre-trained on ImageNet \cite{imagenet}. We firstly pre-train them on the labeled source data, and then fine-tune them on both the labeled source data and unlabeled target data via adversarial training, where we maintain the same supervision signal as the pre-training. 

We follow DANN \cite{dann} to use the SGD training schedule: the learning rate is adjusted by $\eta_p=\frac{\eta_0}{(1+\alpha p)^\beta}$, where $p$ denotes the process of training iterations that is normalized to be in $[0, 1]$, and we set $\eta_0 = 0.0001$, $\alpha = 10$, and $\beta = 0.75$; the hyper-parameter $\lambda$ is initialized at $0$ and is gradually increased to $1$ by $\lambda_{p}=\frac{2}{1+\exp(-\gamma p)}-1$, where we set $\gamma = 10$. We empirically set $q=0.1$. We implement all our methods by \textbf{PyTorch}. The code will be available at https://github.com/huitangtang/DADA-AAAI2020.

\begin{table*}[!t]
	\caption{Ablation studies using Office-31 based on ResNet-50. Please refer to the main text for how they are defined.}
	\label{table:alter_base_office31} 
	\begin{center}
		\resizebox{0.9\textwidth}{!}{
			\begin{tabular}{lccccccc}
				\hline
				Methods                & A $\rightarrow$ W & D $\rightarrow$ W & W $\rightarrow$ D & A $\rightarrow$ D & D $\rightarrow$ A & W $\rightarrow$ A & Avg \\
				\hline
				No Adaptation          & 79.9$\pm$0.3 & 96.8$\pm$0.4 & 99.5$\pm$0.1 & 84.1$\pm$0.4 & 64.5$\pm$0.3 & 66.4$\pm$0.4 & 81.9 \\
				
				DANN       & 81.2$\pm$0.3 & 98.0$\pm$0.2 & 99.8$\pm$0.0 & 83.3$\pm$0.3 & 66.8$\pm$0.3 & 66.1$\pm$0.3 & 82.5 \\
				
				DANN-CA    & 85.4$\pm$0.4 & 98.2$\pm$0.2 & 99.8$\pm$0.0 & 87.1$\pm$0.4 & 68.5$\pm$0.2 & 67.6$\pm$0.3 & 84.4 \\
				\hline
				DADA (w/o em + w/o td)         &  91.0$\pm$0.2 & 98.7$\pm$0.1 & \textbf{100.0}$\pm$0.0 & 90.8$\pm$0.2 & 70.9$\pm$0.3 & 70.2$\pm$0.3 & 86.9 \\
				
				DADA (w/o em)                  & 91.8$\pm$0.1 & 99.0$\pm$0.1 & \textbf{100.0}$\pm$0.0 & 92.5$\pm$0.3 & 72.8$\pm$0.2 & 72.3$\pm$0.3 & 88.1 \\
				
				DADA               & \textbf{92.3}$\pm$0.1 & \textbf{99.2}$\pm$0.1 & \textbf{100.0}$\pm$0.0 & \textbf{93.9}$\pm$0.2 & \textbf{74.4}$\pm$0.1 & \textbf{74.2}$\pm$0.1 & \textbf{89.0} \\
				
				\hline
			\end{tabular}
		}
	\end{center}
\end{table*}

\begin{table*}[!ht]
	\caption{Results for closed set domain adaptation on Office-31 based on ResNet-50. Note that SimNet is implemented by an \textbf{unknown} framework; MADA and DANN-CA are implemented by \textbf{Caffe}; all the other methods are implemented by \textbf{PyTorch}.
	}
	\label{table:results_office31}
	\begin{center}
		\resizebox{0.9\textwidth}{!}{
			\begin{tabular}{lccccccc}
				\hline
				Methods                & A $\rightarrow$ W & D $\rightarrow$ W & W $\rightarrow$ D & A $\rightarrow$ D & D $\rightarrow$ A & W $\rightarrow$ A & Avg \\
				\hline
				No Adaptation \cite{resnet}         & 79.9$\pm$0.3 & 96.8$\pm$0.4 & 99.5$\pm$0.1 & 84.1$\pm$0.4 & 64.5$\pm$0.3 & 66.4$\pm$0.4 & 81.9 \\
				
				DAN \cite{dan}          & 81.3$\pm$0.3 & 97.2$\pm$0.0 & 99.8$\pm$0.0 & 83.1$\pm$0.2 & 66.3$\pm$0.0 & 66.3$\pm$0.1 & 82.3 \\
				
				DANN \cite{dann}      & 81.2$\pm$0.3 & 98.0$\pm$0.2 & 99.8$\pm$0.0 & 83.3$\pm$0.3 & 66.8$\pm$0.3 & 66.1$\pm$0.3 & 82.5 \\
				
				ADDA \cite{adda}                & 86.2$\pm$0.5 & 96.2$\pm$0.3 & 98.4$\pm$0.3 & 77.8$\pm$0.3 & 69.5$\pm$0.4 & 68.9$\pm$0.5 & 82.9 \\
				
				MADA \cite{mada}                   & 90.0$\pm$0.1 & 97.4$\pm$0.1 & 99.6$\pm$0.1 & 87.8$\pm$0.2 & 70.3$\pm$0.3 & 66.4$\pm$0.3 & 85.2 \\
				
				VADA \cite{dirt_t}              & 86.5$\pm$0.5 & 98.2$\pm$0.4 & 99.7$\pm$0.2 & 86.7$\pm$0.4 & 70.1$\pm$0.4 & 70.5$\pm$0.4 & 85.4 \\
				
				DANN-CA \cite{dann_ca}   & 91.35 & 98.24 & 99.48 & 89.94 & 69.63 & 68.76 & 86.2 \\
				
				GTA \cite{gen_to_adapt} & 89.5$\pm$0.5 & 97.9$\pm$0.3 & 99.8$\pm$0.4 & 87.7$\pm$0.5 & 72.8$\pm$0.3 & 71.4$\pm$0.4 & 86.5 \\
				
				MCD \cite{mcd}                  & 88.6$\pm$0.2 & 98.5$\pm$0.1 & \textbf{100.0}$\pm$0.0 & 92.2$\pm$0.2 & 69.5$\pm$0.1 & 69.7$\pm$0.3 & 86.5 \\
				
				CDAN+E \cite{cdan}  & \textbf{94.1}$\pm$0.1 & 98.6$\pm$0.1 & \textbf{100.0}$\pm$0.0 & 92.9$\pm$0.2 & 71.0$\pm$0.3 & 69.3$\pm$0.3 & 87.7 \\
				
				TADA \cite{tada}                & 94.3$\pm$0.3 & 98.7$\pm$0.1 & 99.8$\pm$0.2 & 91.6$\pm$0.3 & 72.9$\pm$0.2 & 73.0$\pm$0.3 & 88.4 \\
				
				SymNets \cite{symnets}          & 90.8$\pm$0.1 & 98.8$\pm$0.3 & \textbf{100.0}$\pm$0.0 & \textbf{93.9}$\pm$0.5 & \textbf{74.6}$\pm$0.6 & 72.5$\pm$0.5 & 88.4 \\
				
				TAT \cite{tat}                  & 92.5$\pm$0.3 & \textbf{99.3}$\pm$0.1 & \textbf{100.0}$\pm$0.0 & 93.2$\pm$0.2 & 73.1$\pm$0.3 & 72.1$\pm$0.3 & 88.4 \\
				\hline
				\textbf{DADA}              & 92.3$\pm$0.1 & 99.2$\pm$0.1 & \textbf{100.0}$\pm$0.0 & \textbf{93.9}$\pm$0.2 & 74.4$\pm$0.1 & \textbf{74.2}$\pm$0.1 & \textbf{89.0} \\
				\hline
			\end{tabular}
		}
	\end{center}
\end{table*}

\subsection{Analysis} 

\noindent\textbf{Ablation Study} We conduct ablation studies on Office-31 to investigate the effects of key components of our proposed DADA based on ResNet-50. Our ablation studies start with the very baseline termed ``No Adaptation'' that simply fine-tunes a ResNet-50 on the source data. To validate the mutually inhibitory relation enabled by DADA, we use DANN \cite{dann} and DANN-CA \cite{dann_ca} respectively as the second and third baselines. To investigate how the entropy minimization principle helps learn more target-discriminative features, we remove the entropy minimization loss (\ref{EqnEMlLoss}) from our main minimax problem (\ref{EqnOverallLoss}), denoted as ``DADA (w/o em)''. To know effects of the proposed source and target discriminative adversarial losses (\ref{EqnDAdversarialLossOnSource}) and (\ref{EqnDAdversarialLossOnTarget}), we remove both (\ref{EqnEMlLoss}) and (\ref{EqnDAdversarialLossOnTarget}) from (\ref{EqnOverallLoss}), denoted as ``DADA (w/o em + w/o td)''. 

\begin{figure}[t]
	\centering
	\includegraphics[width=0.9\columnwidth]{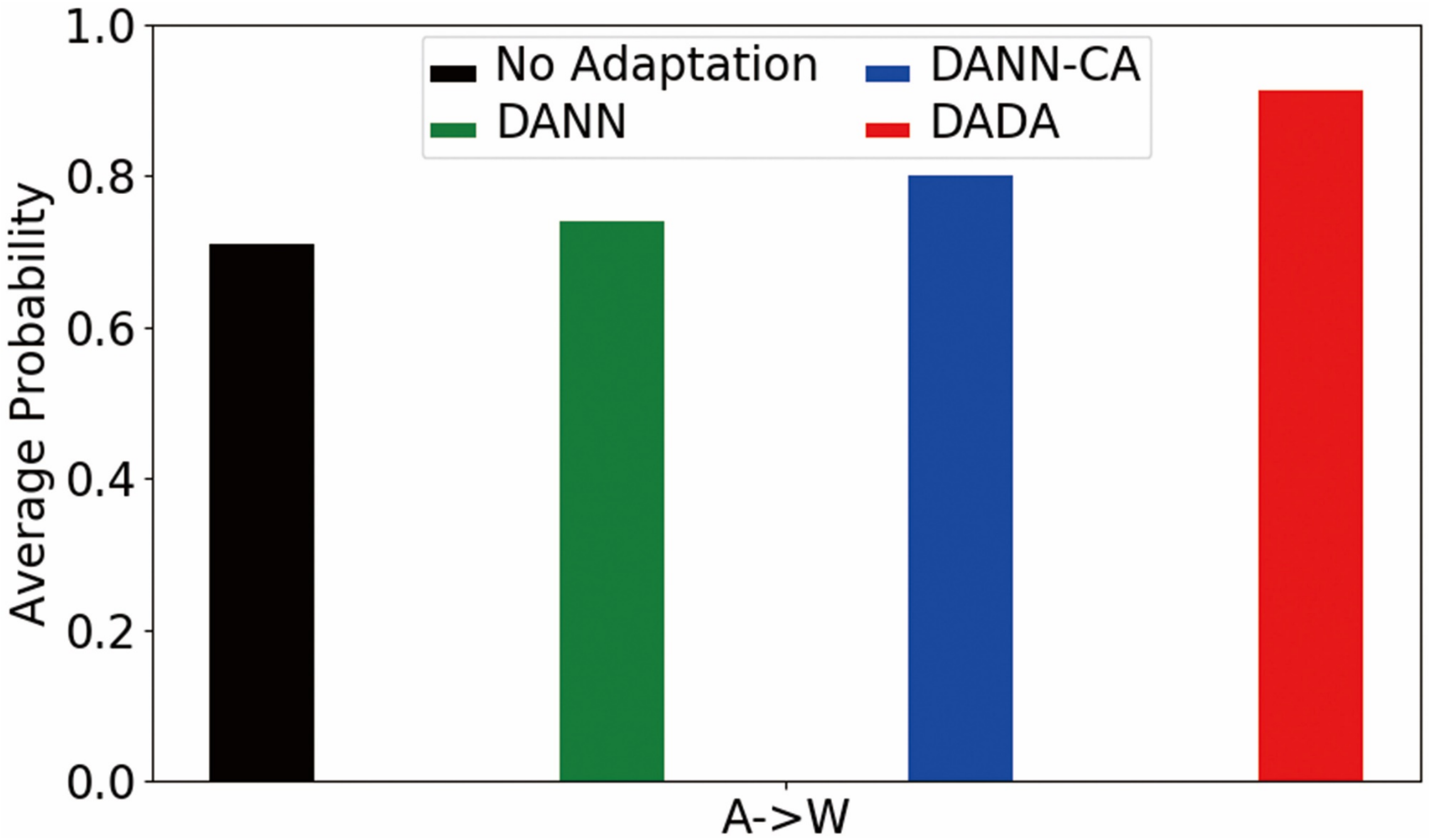}
	\caption{Average probability on the true category over all target instances by task classifiers of different methods. 
	}
	\label{fig:avg_prob}
\end{figure}

Results in Table \ref{table:alter_base_office31} show that although DANN improves over ``No Adaptation'', its result is much worse than DANN-CA, verifying the efficacy of the design of the integrated classifier $F(\cdot)$. ``DADA (w/o em + w/o td)'' improves over DANN-CA and ``DADA (w/o em)'' improves over ``DADA (w/o em + w/o td)'', showing the efficacy of our proposed discriminative adversarial learning. DADA significantly outperforms DANN and DANN-CA, confirming the efficacy of the proposed mutually inhibitory relation between the category and domain predictions in aligning the joint distributions of feature and category across domains. Table \ref{table:alter_base_office31} also confirms that entropy minimization is helpful to learn more target-discriminative features.

\begin{table*}[!ht]
	\caption{Results for closed set domain adaptation on Syn2Real-C based on ResNet-101. Note that all compared methods are based on \textbf{PyTorch} implementation.}
	\label{table:results_visda} 
	\begin{center}
		\resizebox{0.92\textwidth}{!}{
			\begin{tabular}{lccccccccccccc}
				\hline
				Methods                & plane & bcycl & bus & car & horse & knife & mcycl & person & plant & sktbrd & train & truck & mean \\
				\hline
				
				No Adaptation \cite{resnet}           & 55.1 & 53.3 & 61.9 & 59.1 & 80.6 & 17.9 & 79.7 & 31.2 & 81.0 & 26.5 & 73.5 & 8.5 & 52.4 \\
				
				DANN \cite{dann}  & 81.9 & 77.7 & 82.8 & 44.3 & 81.2 & 29.5 & 65.1 & 28.6 & 51.9 & 54.6 & 82.8 & 7.8 & 57.4 \\
				
				DAN \cite{dan}         & 87.1 & 63.0 & 76.5 & 42.0 & 90.3 & 42.9 & 85.9 & 53.1 & 49.7 & 36.3 & 85.8 & 20.7 & 61.1 \\
				
				MCD \cite{mcd} & 87.0 & 60.9 & \textbf{83.7} & 64.0 & 88.9 & 79.6 & 84.7 & \textbf{76.9} & 88.6 & 40.3 & 83.0 & 25.8 & 71.9 \\
				
				GPDA \cite{gpda}  & 83.0 & 74.3 & 80.4 & \textbf{66.0} & 87.6 & 75.3 & 83.8 & 73.1 & \textbf{90.1} & 57.3 & 80.2 & 37.9 & 73.3 \\
				
				ADR \cite{adr}    & 87.8 & \textbf{79.5} & \textbf{83.7} & 65.3 & \textbf{92.3} & 61.8 & \textbf{88.9} & 73.2 & 87.8 & 60.0 & 85.5 & 32.3 & 74.8 \\
				
				\hline
				\textbf{DADA} & \textbf{92.9} & 74.2 & 82.5 & 65.0 & 90.9 & \textbf{93.8} & 87.2 & 74.2 & 89.9 & \textbf{71.5} & \textbf{86.5} & \textbf{48.7} & \textbf{79.8} \\
				\hline
			\end{tabular}
		}
	\end{center} 
\end{table*}

\begin{table*}[!ht]
	\begin{center}
		\caption{Results for open set domain adaptation on Syn2Real-O based on ResNet-152. \emph{Known} indicates the mean classification result over the known categories whereas \emph{Mean} also includes the unknown category. The table below shows the results when the Known-to-Unknown Ratio in the target domain is set to $1:10$. All compared methods are based on \textbf{PyTorch} implementation.}
		\label{table:results_visda_open_set}
		\resizebox{0.92\textwidth}{!}{
			\begin{tabular}{lccccccccccccccc}
				\hline
				\multicolumn{16}{c}{Known-to-Unknown Ratio $= 1:1$} \\
				\hline
				Methods                          & plane & bcycl & bus & car & horse & knife & mcycl & person & plant & sktbrd & train & truck & unk & Known & Mean\\
				\hline
				
				No Adaptation \cite{resnet}      & 49 & 20 & 29 & 47 & 62 & 27 & 79 & 3 & 37 & 19 & 70 & 1 & 62 & 36 & 38 \\
				
				DAN \cite{dan}                   & 51 & 40 & 42 & 56 & 68 & 24 & 75 & 2 & 39 & 30 & 71 & 2 & 75 & 41 & 44 \\
				
				DANN \cite{dann}                 & 59 & 41 & 16 & 54 & 77 & 18 & 88 & 4 & 44 & 32 & 68 & 4 & 61 & 42 & 43 \\
				
				AODA \cite{bp_for_os}            & 85 & 71 & 65 & 53 & \textbf{83} & 10 & 79 & 36 & \textbf{73} & 56 & 79 & \textbf{32} & \textbf{87} & 60 & 62 \\
				\hline
				\textbf{DADA-O}                   & \textbf{88} & \textbf{76} & \textbf{76} & \textbf{64} & 79 & \textbf{46} & \textbf{91} & \textbf{62} & 52 & \textbf{63} & \textbf{86} & 8 & 55 & \textbf{66} & \textbf{65} \\
				\hline
				\multicolumn{16}{c}{Known-to-Unknown Ratio $= 1:10$} \\
				\hline
				AODA \cite{bp_for_os}            & \textbf{80} & \textbf{63} & 59 & 63 & \textbf{83} & 12 & 89 & 5 & \textbf{61} & 14 & 79 & 0 & \textbf{69} & 51 & 52 \\
				\hline
				\textbf{DADA-O}                   & 77 & \textbf{63} & \textbf{75} & \textbf{71} & 38 & \textbf{33} & \textbf{92} & \textbf{58} & 47 & \textbf{50} & \textbf{89} & \textbf{1} & 50 & \textbf{58} & \textbf{57} \\
				\hline
			\end{tabular}
		}
	\end{center}
\end{table*}

\noindent\textbf{Quantitative Comparison} To compare the efficacy of different methods in reducing domain discrepancy at the category level, we visualize the average probability on the true category over all target instances by task classifiers of No Adaptation, DANN, DANN-CA, and DADA on \textbf{A} $\rightarrow$ \textbf{W} in Figure \ref{fig:avg_prob}. Note that here we use labels of the target data for the quantization of category-level domain discrepancy. Figure \ref{fig:avg_prob} shows that our proposed DADA gives the predicted probability on the true category of any target instance a better chance to approach $1$, meaning that target instances are more likely to be correctly classified by DADA, i.e., a better category-level domain alignment.

\begin{table}[!htb]
	\begin{center}
		\caption{Results for partial domain adaptation on Syn2Real-C based on ResNet-50. Note that all compared methods are based on \textbf{PyTorch} implementation.}
		\label{table:results_visda_partial_transfer}
		\resizebox{0.85\columnwidth}{!}{
			\begin{tabular}{lc}
				\hline
				Methods  & Synthetic 12$\rightarrow$Real 6  \\
				\hline
				No Adaptation \cite{resnet} & 45.26  \\
				
				DAN \cite{dan}              & 47.60   \\
				
				DANN \cite{dann}            & 51.01   \\
				
				RTN \cite{rtn}              & 50.04   \\  
				
				PADA \cite{pada}            & 53.53  \\
				\hline
				\textbf{DADA-P}              & \textbf{69.06}  \\
				\hline
			\end{tabular}
		}
	\end{center}
\end{table}

\subsection{Results}
\noindent\textbf{Closed Set Domain Adaptation} We compare in Tables \ref{table:results_office31} and \ref{table:results_visda} our proposed method with existing ones on Office-31 and Syn2Real-C based on ResNet-50 and ResNet-101 respectively. Whenever available, results of existing methods are quoted from their respective papers or the recent works \cite{mada,cdan,tat,mcd}. Our proposed DADA outperforms existing methods, testifying the efficacy of DADA in aligning the joint distributions of feature and category across domains.

\noindent\textbf{Partial Domain Adaptation} We compare in Table \ref{table:results_visda_partial_transfer} our proposed method to existing ones on Syn2Real-C based on ResNet-50. Results of existing methods are quoted from the work \cite{pada}. Our proposed DADA-P substantially outperforms all comparative methods by $+15.53\%$, showing the effectiveness of DADA-P on reducing the negative influence of source outliers while promoting the joint distribution alignment in the shared label space. 

\noindent\textbf{Open Set Domain Adaptation} We compare in Table \ref{table:results_visda_open_set} our proposed method with existing ones on Syn2Real-O based on ResNet-152. Results of existing methods are quoted from the recent work \cite{visda}. Our proposed DADA-O outperforms all comparative methods in both evaluation metrics of Known and Mean, showing the efficacy of DADA-O in both aligning joint distributions of the known instances and identifying the unknown target instances. It is noteworthy that DADA-O improves over the state-of-the-art method AODA by a large margin when the known-to-unknown ratio in the target domain is much smaller than $1$, i.e. the false alignment between the known source and unknown target instances will be much more serious. This observation confirms the efficacy of DADA-O.

We provide more results and analysis for the three problem settings in the supplemental material.

\section{Conclusion}
We propose a novel adversarial learning method termed Discriminative Adversarial Domain Adaptation (DADA) to overcome the limitation in aligning the joint distributions of feature and category across domains, which is due to an issue of mode collapse induced by the separate design of task and domain classifiers. Based on an integrated task and domain classifier, DADA has a novel adversarial objective that encourages a mutually inhibitory relation between the category and domain predictions, which can promote the joint distribution alignment. Unlike previous methods, DADA explicitly enables a discriminative interaction between category and domain predictions. Except for closed set domain adaptation, we also extend DADA for more challenging problem settings of partial and open set domain adaptation. Experiments on benchmark datasets testify the efficacy of our proposed methods for all the three settings.

\section{Acknowledgments}
This work is supported in part by the National Natural Science Foundation of China (Grant No.: 61771201), the Program for Guangdong Introducing Innovative and Enterpreneurial Teams (Grant No.: 2017ZT07X183), and the Guangdong R\&D key project of China (Grant No.: 2019B010155001).

\fontsize{9.0pt}{10.0pt} \selectfont 
\bibliography{egbib}
\bibliographystyle{aaai}

\clearpage

\appendix

\newtheorem{proposition}{Proposition}
\numberwithin{equation}{section}
\newtheorem{theorem}{Theorem}
\newtheorem{definition}{Definition}

\setcounter{secnumdepth}{2}

We provide an intuitive explanation for our proposed loss (4) in Section \ref{sec2}. We theoretically prove that our proposed method can better bound the expected target error than existing ones in Section \ref{sec3}. We provide more results and analysis on benchmark datasets of Digits, Office-31, Office-Home, and ImageNet-Caltech for closed set, partial, and open set domain adaptation in Section \ref{sec4}. We present empirical evidence on benchmark datasets of digits that shows the efficacy of our used training scheme in Section \ref{sec5}. We will release the code soon.

\section{Intuitive Explanation for Our Proposed Loss (4)}
\label{sec2}
We denote the output vector of class scores of $F(G(\mathbf{x}))$ before the final softmax operation for an instance $\mathbf{x}$ as $\mathbf{o}(\mathbf{x}) \in \mathbb{R}^{K+1}$, and its $k^{th}$ element as $o_k(\mathbf{x})$, $k \in \{1,\dots, K+1\}$. We denote the output vector of class probabilities of $F(G(\mathbf{x}))$ after the final softmax operation for an instance $\mathbf{x}$ as $\mathbf{p}(\mathbf{x}) \in [0,1]^{K+1}$, and its $k^{th}$ element as $p_k(\mathbf{x})$, $k \in \{1,\dots, K+1\}$. We write $p_k(\mathbf{x})$, $k \in \{1,\dots, K+1\}$ as 
\begin{eqnarray}\label{EqnSoftmax}
\begin{aligned}
p_k(\mathbf{x}) = \frac{\exp(o_k(\mathbf{x}))}{\sum_{k'=1}^{K+1}\exp(o_{k'}(\mathbf{x}))}. 
\end{aligned}
\end{eqnarray}
We always have $\sum_{k=1}^{K+1} p_k(\mathbf{x}) = 1$ for any instance $\mathbf{x}$. When maximized over the feature extractor $G(\cdot)$, the adversarial loss on an unlabeled target instance $\mathbf{x}^t$ (cf. objective (2) in Section \textbf{Discriminative Adversarial Domain Adaptation} in the paper) is written as 
\begin{eqnarray}\label{EqnTarAdvLoss}
\begin{aligned}
{l}^t(G, F) &  = \log(1-p_{K+1}(\mathbf{x}^t)) 
\\ & = \log(\sum_{k=1}^K p_k(\mathbf{x}^t)) 
\\ & = \log\Big(\frac{\sum_{k=1}^{K} \exp(o_k(\mathbf{x}^t))}{\sum_{k'=1}^{K+1} \exp(o_{k'}(\mathbf{x}^t))}\Big). 
\end{aligned}
\end{eqnarray}
We write the gradient formulas of ${l}^t$ w.r.t. $o_k(\mathbf{x})$, $k \in \{1, \dots, K\}$ as 
\begin{align}
&\nabla_{o_k(\mathbf{x}^t)} \nonumber \\ & = \frac{\partial {l}^t}{\partial o_{k}(\mathbf{x}^t)} \nonumber
\\ & = \frac{\sum_{k'=1}^{K+1} \exp(o_{k'}(\mathbf{x}^t))}{\sum_{k=1}^{K} \exp(o_k(\mathbf{x}^t))} \cdot \frac{\exp(o_k(\mathbf{x}^t))\cdot\exp(o_{K+1}(\mathbf{x}^t))}{\Big(\sum_{k'=1}^{K+1} \exp(o_{k'}(\mathbf{x}^t))\Big)^2}, 
\end{align}
where $\nabla_{o_k(\mathbf{x}^t)}$, $k \in \{1, \dots, K\}$ differ in the term of $\exp(o_k(\mathbf{x}^t))$, meaning that they are proportional to the class scores of $o_k(\mathbf{x}^t)$. In other words, the higher the class score is (i.e., the higher the class probability is), the stronger gradient the corresponding category neuron back-propagates, suggesting that the target instance is aligned to several most confident/related categories on the source domain. Such a mechanism to align the joint distributions of feature and category across domains is rather implicit. To make it more explicit, our proposed target discriminative adversarial loss (cf. loss (4) in Section \textbf{Discriminative Adversarial Learning} in the paper) uses the conditional probabilities to weight the category-wise domain predictions. By such a design, the discriminative adversarial training on the target data explicitly conducts the competition between the domain neuron (output) and the most confident category neuron (output) as the discriminative adversarial training on the source data does, thus promoting the category-level domain alignment. This is what we mean by the \emph{mutually inhibitory relation between the category and domain predictions} for any input instance.

This intuitive explanation manifests that the adversarial training of DADA clearly and explicitly utilizes the discriminative information of the target domain, thus improving the alignment of joint distributions of feature and category across domains. 

\section{Generalization Error Analysis for Our Proposed DADA}
\label{sec3}
We prove that our proposed DADA can better bound the expected target error than existing domain adaptation methods \cite{dann,adda,mada,SimNet,iCAN,dirt_t,cdan,tada,hla,dann_ca}, taking the similar formalism of theoretical results of domain adaptation \cite{da_theory1,da_theory2}.

For all hypothesis spaces introduced below, we assume them of finite effective size, i.e., finite VC dimension, so that the following distance measures defined over these spaces can be estimated from finite instances \cite{da_theory2}. 
We consider a fixed representation function $G(\cdot)$ from the instance set $X$ to the feature space $Z$, i.e., $\mathbf{z} = G(\mathbf{x})$, and a hypothesis space $\cal{H}$ for the $K$-category task classifier $C(\cdot)$ from the feature space $Z$ to the label space $Y$, i.e., $C \in \cal{H}$ \cite{dann}. Note that $\mathbf{y} \in Y$ is the $K$-dimensional one-hot vector for any label $y$. Denote the marginal feature distribution and the joint distribution of feature and category by $P_{Z}^s$ and $P_{Z, Y}^s$ for the source domain ${\cal{D}}_s$, and similarly $P_{Z}^t$ and $P_{Z, Y}^t$ for the target domain ${\cal{D}}_t$, respectively. 
Let $\epsilon_s(C) = \mathbb{E}_{(\mathbf{z}, \mathbf{y}) \sim P_{Z, Y}^s} {\rm I}[C(\mathbf{z}) \ne \mathbf{y}]$ be the expected source error of a hypothesis $C \in \cal{H}$ w.r.t. the joint distribution $P_{Z, Y}^s$, where ${\rm I}[a]$ is the indicator function which is $1$ if predicate $a$ is true, and $0$ otherwise. Similarly, $\epsilon_t(C) = \mathbb{E}_{(\mathbf{z}, \mathbf{y}) \sim P_{Z, Y}^t} {\rm I}[C(\mathbf{z}) \ne \mathbf{y}]$ denotes the expected target error of $C$ w.r.t. the joint distribution $P_{Z, Y}^t$. Let $C^* = {\rm argmin}_{C \in \cal{H}}[\epsilon_s(C)+\epsilon_t(C)]$ be the ideal joint hypothesis that explicitly embodies the notion of adaptability \cite{da_theory2}. Let $\epsilon_s(C, C^*) = \mathbb{E}_{(\mathbf{z}, \mathbf{y}) \sim P_{Z, Y}^s} {\rm I} [C(\mathbf{z}) \ne C^*(\mathbf{z})]$ and $\epsilon_t(C, C^*) = \mathbb{E}_{(\mathbf{z}, \mathbf{y}) \sim P_{Z, Y}^t} {\rm I} [C(\mathbf{z}) \ne C^*(\mathbf{z})]$ be the disagreement between hypotheses $C$ and $C^*$ w.r.t. the joint distributions $P_{Z, Y}^s$ and $P_{Z, Y}^t$ respectively. Specified by the two works \cite{da_theory1,da_theory2}, the probabilistic bound of the expected target error $\epsilon_t(C)$ of the hypothesis $C$ is given by the sum of the expected source error $\epsilon_s(C)$, the combined error $[\epsilon_s(C^*) + \epsilon_t(C^*)]$ of the ideal joint hypothesis $C^*$, and the distribution discrepancy across data domains, as the follow
\begin{eqnarray}\label{EqnExpTarErrBound}
\begin{aligned}
&\epsilon_t(C) \le \\ 
&\epsilon_s(C) + [\epsilon_s(C^*) + \epsilon_t(C^*)] + |\epsilon_s(C, C^*) - \epsilon_t(C, C^*)|. 
\end{aligned}
\end{eqnarray}

For domain adaptation to be possible, a natural assumption is that there exists the ideal joint hypothesis $C^* \in \cal{H}$ so that the combined error $[\epsilon_s(C^*) + \epsilon_t(C^*)]$ is small. The ideal joint hypothesis $C^*$ may not be unique, since in practice we always have the same error obtained by two different machine learning models. Denote a set of ideal joint hypotheses by ${\cal{H}}^*$, which is a subset of $\cal{H}$, i.e., ${\cal{H}}^* \subset \cal{H}$. Based on this assumption, domain adaptation aims to reduce the domain discrepancy $|\epsilon_s(C, C^*) - \epsilon_t(C, C^*)|$. Let $\mathbf{c} = C(\mathbf{z})$ be the proxy of the label vector $\mathbf{y}$ of $\mathbf{z}$, for every pair of $(\mathbf{z}, \mathbf{y}) \sim P_{Z, Y}^s \cup P_{Z, Y}^t$. Denote the thus obtained proxies of the joint distributions $P_{Z, Y}^s$ and $P_{Z, Y}^t$ by $P_{Z, C}^s=(\mathbf{z}, C(\mathbf{z}))_{\mathbf{z} \sim P_{Z}^s}$ and $P_{Z, C}^t=(\mathbf{z}, C(\mathbf{z}))_{\mathbf{z} \sim P_{Z}^t}$, respectively \cite{jointDistOptimal}. Then, by definition, $\epsilon_s(C, C^*) = \mathbb{E}_{(\mathbf{z}, \mathbf{y}) \sim P_{Z, Y}^s} {\rm I} [C(\mathbf{z}) \ne C^*(\mathbf{z})] = \mathbb{E}_{(\mathbf{z}, \mathbf{c}) \sim P_{Z, C}^s} {\rm I} [\mathbf{c} \ne C^*(\mathbf{z}))]$, and similarly $\epsilon_t(C, C^*) = \mathbb{E}_{(\mathbf{z}, \mathbf{y}) \sim P_{Z, Y}^t} {\rm I} [C(\mathbf{z}) \ne C^*(\mathbf{z})] = \mathbb{E}_{(\mathbf{z}, \mathbf{c}) \sim P_{Z, C}^t} {\rm I} [ \mathbf{c} \ne C^*(\mathbf{z})]$. Based on the two joint distribution proxies, we have the domain discrepancy 
\begin{eqnarray}\label{EqnExpDistDiscBound0}
\begin{aligned}
& |\epsilon_s(C, C^*) - \epsilon_t(C, C^*)| \\
& = |\mathbb{E}_{(\mathbf{z}, \mathbf{y}) \sim P_{Z, Y}^s} {\rm I} [C(\mathbf{z}) \ne C^*(\mathbf{z})] \\& \qquad - \mathbb{E}_{(\mathbf{z}, \mathbf{y}) \sim P_{Z, Y}^t} {\rm I} [C(\mathbf{z}) \ne C^*(\mathbf{z})]| \\
& = |\mathbb{E}_{(\mathbf{z}, \mathbf{c}) \sim P_{Z, C}^s} {\rm I} [\mathbf{c} \ne C^*(\mathbf{z}))] \\& \qquad - \mathbb{E}_{(\mathbf{z}, \mathbf{c}) \sim P_{Z, C}^t} {\rm I} [ \mathbf{c} \ne C^*(\mathbf{z})]|. 
\end{aligned}
\end{eqnarray}

Inspired by the two works \cite{cdan,da_theory3}, we next introduce four definitions of the distance measure that can upper bound the domain discrepancy.

\begin{definition}
	Let ${\cal{F}}_{{\cal{H}}^*} = \{F(C^*(\mathbf{z}), \mathbf{c} )={\rm I} [\mathbf{c} \ne C^*(\mathbf{z})] | C^* \in {\cal{H}}^* \}$ be a (loss) difference hypothesis space over the joint variable of $(C^*(\mathbf{z}), \mathbf{c})$, where $F:(C^*(\mathbf{z}), \mathbf{c}) \mapsto \{0, 1\}$ computes the empirical 0-1 classification loss of the task classifier $C^* \in {\cal{H}}^*$ for any input pair of $(\mathbf{z}, \mathbf{c}) \sim P_{Z, C}^s \cup P_{Z, C}^t$. Then, the ${\cal{F}}_{{\cal{H}}^*}$-distance between two distributions $P_{Z, C}^s$ and $P_{Z, C}^t$, is defined as 
	\begin{eqnarray}\label{EqnExpDistDiscBound1}
	\begin{aligned}
	& d_{{\cal{F}}_{{\cal{H}}^*}} (P_{Z, C}^s, P_{Z, C}^t) \\ 
	& \triangleq \sup_{F \in {\cal{F}}_{{\cal{H}}^*}, C^* \in {\cal{H}}^*} |\mathbb{E}_{(\mathbf{z}, \mathbf{c}) \sim P_{Z, C}^s} F(C^*(\mathbf{z}), \mathbf{c}) \\ & \qquad\qquad\qquad\qquad - \mathbb{E}_{(\mathbf{z}, \mathbf{c}) \sim P_{Z, C}^t} F(C^*(\mathbf{z}), \mathbf{c})| \\
	& = \sup_{C^* \in {\cal{H}}^*} |\mathbb{E}_{(\mathbf{z}, \mathbf{c}) \sim P_{Z, C}^s} {\rm I} [\mathbf{c} \ne C^*(\mathbf{z}))] \\ & \qquad\qquad\quad - \mathbb{E}_{(\mathbf{z}, \mathbf{c}) \sim P_{Z, C}^t} {\rm I} [ \mathbf{c} \ne C^*(\mathbf{z})]|. 
	\end{aligned}
	\end{eqnarray}
\end{definition}

\begin{definition}
	Let $\cal{F}$ be a (loss) difference hypothesis space, which contains a class of functions $F:(\mathbf{z},\mathbf{c}) \mapsto \{0,1\}$ over the joint variable of $(\mathbf{z}, \mathbf{c}) \sim P_{Z, C}^s \cup P_{Z, C}^t$. Then, the $\cal{F}$-distance between two distributions $P_{Z, C}^s$ and $P_{Z, C}^t$, is defined as 
	\begin{eqnarray}\label{EqnExpDistDiscBound2}
	\begin{aligned}
	& d_{\cal{F}} (P_{Z, C}^s, P_{Z, C}^t) \\ 
	& \triangleq \sup_{F \in \cal{F}} |\mathbb{E}_{(\mathbf{z}, \mathbf{c}) \sim P_{Z, C}^s} F(\mathbf{z}, \mathbf{c}) - \mathbb{E}_{(\mathbf{z}, \mathbf{c}) \sim P_{Z, C}^t} F(\mathbf{z}, \mathbf{c})|. 
	\end{aligned}
	\end{eqnarray}
\end{definition}

\begin{definition}
	Let ${\cal{F}}_{\cal{H}} = \{F:(C'(\mathbf{z}), \mathbf{c}) \mapsto \{0, 1\} | C' \in {\cal{H}} \}$ be a (loss) difference hypothesis space over the joint variable of $(C'(\mathbf{z}), \mathbf{c})$, where $F(C'(\mathbf{z}), \mathbf{c})$ computes the empirical 0-1 classification loss of the task classifier $C' \in {\cal{H}}$ for any input pair of $(\mathbf{z}, \mathbf{c}) \sim P_{Z, C}^s \cup P_{Z, C}^t$. Then, the ${\cal{F}}_{\cal{H}}$-distance between two distributions $P_{Z, C}^s$ and $P_{Z, C}^t$, is defined as 
	\begin{eqnarray}\label{EqnExpDistDiscBound3}
	\begin{aligned}
	& d_{{\cal{F}}_{\cal{H}}} (P_{Z, C}^s, P_{Z, C}^t) \\ 
	& \triangleq \sup_{F \in {\cal{F}}_{\cal{H}}, C' \in \cal{H}} |\mathbb{E}_{(\mathbf{z}, \mathbf{c}) \sim P_{Z, C}^s} F(C'(\mathbf{z}), \mathbf{c}) \\ & \qquad\qquad\qquad\quad - \mathbb{E}_{(\mathbf{z}, \mathbf{c}) \sim P_{Z, C}^t} F(C'(\mathbf{z}), \mathbf{c})|.  
	\end{aligned}
	\end{eqnarray}
\end{definition}

\begin{definition}
	Let $\cal{D}$ be a (loss) difference hypothesis space, which contains a class of functions $D : \mathbf{z} \mapsto \{ 0, 1 \}$ over $\mathbf{z} \sim P_Z^s \cup P_Z^t$. Then, the $\cal{D}$-distance between two distributions $P_{Z, C}^s$ and $P_{Z, C}^t$, is defined as 
	\begin{eqnarray}\label{EqnExpDistDiscBound4}
	\begin{aligned}
	d_{\cal{D}} (P_{Z}^s, P_{Z}^t) \triangleq \sup_{D \in \cal{D}} |\mathbb{E}_{\mathbf{z} \sim P_{Z}^s} D(\mathbf{z}) - \mathbb{E}_{\mathbf{z} \sim P_{Z}^t} D(\mathbf{z})|. 
	\end{aligned}
	\end{eqnarray}
\end{definition}

We are now ready to give an upper bound on the domain discrepancy in terms of the distance measures we have defined.

\begin{theorem}
	\label{tighterBound}
	The distribution discrepncy between the source and target domains $|\epsilon_s(C, C^*) - \epsilon_t(C, C^*)|$ can be upper bounded by the ${\cal{F}}_{{\cal{H}}^*}$-distance, the ${\cal{F}}_{\cal{H}}$-distance, the $\cal{F}$-distance, and the $\cal{D}$-distance as follows 
	\begin{eqnarray}\label{EqnExpDistDiscBound5}
	\begin{aligned}
	&|\epsilon_s(C, C^*) - \epsilon_t(C, C^*)| \\
	&\le d_{{\cal{F}}_{{\cal{H}}^*}} (P_{Z, C}^s, P_{Z, C}^t) \\
	&\le d_{{\cal{F}}_{\cal{H}}} (P_{Z, C}^s, P_{Z, C}^t) \\
	&\le d_{\cal{F}} (P_{Z, C}^s, P_{Z, C}^t) \\
	&\le d_{\cal{D}} (P_{Z}^s, P_{Z}^t). 
	\end{aligned}
	\end{eqnarray}
\end{theorem}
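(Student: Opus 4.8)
The plan is to read the chain from left to right and prove the four links one at a time. The first three are all instances of a single elementary principle---that enlarging the family of witnesses over which a supremum is taken can only enlarge the supremum---so the substantive work is concentrated entirely in the last link, where the joint and marginal viewpoints must be reconciled.

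For the leftmost inequality $|\epsilon_s(C, C^*) - \epsilon_t(C, C^*)| \le d_{{\cal{F}}_{{\cal{H}}^*}}(P_{Z, C}^s, P_{Z, C}^t)$, I would simply invoke the identity (\ref{EqnExpDistDiscBound0}), which rewrites the left-hand side as $|\mathbb{E}_{(\mathbf{z}, \mathbf{c}) \sim P_{Z, C}^s}{\rm I}[\mathbf{c} \ne C^*(\mathbf{z})] - \mathbb{E}_{(\mathbf{z}, \mathbf{c}) \sim P_{Z, C}^t}{\rm I}[\mathbf{c} \ne C^*(\mathbf{z})]|$ for the specific ideal hypothesis $C^*$. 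Since $C^* \in {\cal{H}}^*$ is one admissible witness in the supremum defining $d_{{\cal{F}}_{{\cal{H}}^*}}$, the value at this single $C^*$ is dominated by the supremum, which is exactly the claim. The next two links follow from nested inclusions of witness classes: because ${\cal{H}}^* \subset {\cal{H}}$ we have ${\cal{F}}_{{\cal{H}}^*} \subseteq {\cal{F}}_{\cal{H}}$, giving $d_{{\cal{F}}_{{\cal{H}}^*}} \le d_{{\cal{F}}_{\cal{H}}}$; and because each member of ${\cal{F}}_{\cal{H}}$ is the 0-1 loss ${\rm I}[\mathbf{c} \ne C'(\mathbf{z})]$, which is itself a map $(\mathbf{z}, \mathbf{c}) \mapsto \{0,1\}$ and hence a member of the richer class $\cal{F}$, we have ${\cal{F}}_{\cal{H}} \subseteq {\cal{F}}$ and therefore $d_{{\cal{F}}_{\cal{H}}} \le d_{\cal{F}}$.

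The hard part will be the final link $d_{\cal{F}}(P_{Z, C}^s, P_{Z, C}^t) \le d_{\cal{D}}(P_Z^s, P_Z^t)$, since now the witness families act on different arguments---$\cal{F}$ on the joint pair $(\mathbf{z}, \mathbf{c})$ and $\cal{D}$ on the feature $\mathbf{z}$ alone---so no set inclusion is available directly. The key observation I would exploit is that in the proxy distributions $P_{Z, C}^s$ and $P_{Z, C}^t$ the label coordinate is \emph{deterministic}, namely $\mathbf{c} = C(\mathbf{z})$. This lets me pull back any $F \in \cal{F}$ to a map $\tilde{D}(\mathbf{z}) := F(\mathbf{z}, C(\mathbf{z}))$ that depends on $\mathbf{z}$ alone and satisfies $\mathbb{E}_{(\mathbf{z}, \mathbf{c}) \sim P_{Z, C}^s} F(\mathbf{z}, \mathbf{c}) = \mathbb{E}_{\mathbf{z} \sim P_Z^s} \tilde{D}(\mathbf{z})$, and identically on the target domain. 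Under the natural richness assumption that $\cal{D}$ contains every such pulled-back $\tilde{D}$ (in the spirit of the conditioning construction of CDAN \cite{cdan}), each term in the supremum defining $d_{\cal{F}}$ coincides with the corresponding term for $\tilde{D} \in \cal{D}$ and is thus bounded by $d_{\cal{D}}$; taking the supremum over $F \in \cal{F}$ yields the inequality. I expect this collapsing-via-determinism argument, and in particular stating the richness condition on $\cal{D}$ precisely enough to license the pullback, to be the only genuine obstacle, the other three links being immediate consequences of supremum monotonicity.
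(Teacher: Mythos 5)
Your proposal is correct and follows essentially the same route as the paper's proof: the first three links via supremum monotonicity over nested witness classes (${\cal{H}}^*\subset{\cal{H}}$ giving ${\cal{F}}_{{\cal{H}}^*}\subset{\cal{F}}_{\cal{H}}\subset{\cal{F}}$), and the last link by using the determinism $\mathbf{c}=C(\mathbf{z})$ under the proxy distributions to view each $F(\mathbf{z},\mathbf{c})$ as a function of $\mathbf{z}$ alone lying in ${\cal{D}}$. Your explicit pullback $\tilde{D}(\mathbf{z}):=F(\mathbf{z},C(\mathbf{z}))$ with a stated richness condition on ${\cal{D}}$ is in fact a more careful rendering of what the paper compresses into the assertion ``${\cal{F}}\subset{\cal{D}}$''.
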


\begin{proof}
	Comparing (\ref{EqnExpDistDiscBound0}) and (\ref{EqnExpDistDiscBound1}), since $|\mathbb{E}_{(\mathbf{z}, \mathbf{c}) \sim P_{Z, C}^s} {\rm I} [\mathbf{c} \ne C^*(\mathbf{z}))] - \mathbb{E}_{(\mathbf{z}, \mathbf{c}) \sim P_{Z, C}^t} {\rm I} [ \mathbf{c} \ne C^*(\mathbf{z})]| \le \sup_{C^* \in {\cal{H}}^*} |\mathbb{E}_{(\mathbf{z}, \mathbf{c}) \sim P_{Z, C}^s} {\rm I} [\mathbf{c} \ne C^*(\mathbf{z}))] - \mathbb{E}_{(\mathbf{z}, \mathbf{c}) \sim P_{Z, C}^t} {\rm I} [ \mathbf{c} \ne C^*(\mathbf{z})]|$, we have $|\epsilon_s(C, C^*) - \epsilon_t(C, C^*)| \le d_{{\cal{F}}_{{\cal{H}}^*}} (P_{Z, C}^s, P_{Z, C}^t)$.
	
	Since by definition the hypothesis space $\cal{F}$ contains all functions that map $(\mathbf{z}, \mathbf{c})$ to $\{0, 1\}$, $F(C^*(\mathbf{z}), \mathbf{c})$ is also a function in $\cal{F}$ that can be written as the form of functions in ${\cal{F}}_{{\cal{H}}^*}$. The hypothesis space ${\cal{F}}_{{\cal{H}}^*}$ is subsumed by $\cal{F}$ , i.e., ${\cal{F}}_{{\cal{H}}^*} \subset \cal{F}$. Thus, we have $|\epsilon_s(C, C^*) - \epsilon_t(C, C^*)| \le d_{{\cal{F}}_{{\cal{H}}^*}} (P_{Z, C}^s, P_{Z, C}^t) \le d_{\cal{F}} (P_{Z, C}^s, P_{Z, C}^t)$.
	
	Similarly, since ${\cal{F}}_{\cal{H}} \subset \cal{F}$, we have $d_{{\cal{F}}_{\cal{H}}} (P_{Z, C}^s, P_{Z, C}^t) \le d_{\cal{F}} (P_{Z, C}^s, P_{Z, C}^t)$. Since by definition the ideal joint hypothesis set ${\cal{H}}^* \subset \cal{H}$, the hypothesis space ${\cal{F}}_{{\cal{H}}^*}$ is subsumed by ${\cal{F}}_{\cal{H}}$, i.e., ${\cal{F}}_{{\cal{H}}^*} \subset {\cal{F}}_{\cal{H}}$. Thus, we have $|\epsilon_s(C, C^*) - \epsilon_t(C, C^*)| \le d_{{\cal{F}}_{{\cal{H}}^*}} (P_{Z, C}^s, P_{Z, C}^t) \le d_{{\cal{F}}_{\cal{H}}} (P_{Z, C}^s, P_{Z, C}^t)$. 
	
	Since by definition the hypothesis space $\cal{D}$ contains all functions that map $\mathbf{z}$ to $\{0, 1\}$, $F(\mathbf{z}, \mathbf{c})=F(\mathbf{z}, C(\mathbf{z}))$ is also a function in $\cal{D}$ that can be written as the form of functions in $\cal{F}$. The hypothesis space $\cal{F}$ is subsumed by $\cal{D}$, i.e., $ \cal{F} \subset \cal{D}$. Thus, we have $d_{\cal{F}} (P_{Z, C}^s, P_{Z, C}^t) \le d_{\cal{D}} (P_{Z}^s, P_{Z}^t)$.
	
	These prove the inequality $|\epsilon_s(C, C^*) - \epsilon_t(C, C^*)| \le d_{{\cal{F}}_{{\cal{H}}^*}} (P_{Z, C}^s, P_{Z, C}^t) \le d_{{\cal{F}}_{\cal{H}}} (P_{Z, C}^s, P_{Z, C}^t) \le d_{\cal{F}} (P_{Z, C}^s, P_{Z, C}^t) \le d_{\cal{D}} (P_{Z}^s, P_{Z}^t)$.
\end{proof}

Theorem \ref{tighterBound} shows that the ${\cal{F}}_{{\cal{H}}^*}$-distance can best upper bound the domain discrepncy $|\epsilon_s(C, C^*) - \epsilon_t(C, C^*)|$, but cannot be computable, since instances on the target domain for unsupervised domain adaptation are unlabeled; the ${\cal{F}}_{\cal{H}}$-distance can better bound the domain discrepncy $|\epsilon_s(C, C^*) - \epsilon_t(C, C^*)|$ than the $\cal{F}$-distance and the $\cal{D}$-distance, and the hypothesis space ${\cal{F}}_{\cal{H}}$ can be implemented by conditioning the function $F(\mathbf{z}, \mathbf{c}) \in \cal{F}$ on the other one $C(\mathbf{z}) \in \cal{H}$; the $\cal{F}$-distance can tighter bound the domain discrepncy $|\epsilon_s(C, C^*) - \epsilon_t(C, C^*)|$ than the $\cal{D}$-distance, and the hypothesis space $\cal{F}$ can be realized by taking as input both the feature representation $\mathbf{z}$ and the category prediction $\mathbf{c}$; the $\cal{D}$-distance can loosely bound the domain discrepncy $|\epsilon_s(C, C^*) - \epsilon_t(C, C^*)|$, and the hypothesis space $\cal{D}$ can be instantiated by taking as input only the feature representation $\mathbf{z}$. Since existing deep domain adaptation methods are based on deep neuron networks, the inference of the hypothesis space ${\cal{F}}_{{\cal{H}}^*} \subset {\cal{F}}_{\cal{H}} \subset \cal{F} \subset \cal{D}$ is reasonable and realistic in that, for any given function, there must exist a feedforward neural network or multilayer perceptron, which can approximate it with arbitrarily small error \cite{univAppTheo1,univAppTheo2}, however, the effective model capacity is limited by the capabilities of the optimization algorithm \cite{dlbook}. 

Since these methods \cite{dann,adda,SimNet,iCAN,dirt_t} are based on a separate domain classifier that takes as input only the feature representation, they aim to measure and minimize the $\cal{D}$-distance. Since these methods \cite{mada,cdan,tada,hla} are based on one or several conditional domain classifiers that take as input both the feature representation and the category prediction, they aim to measure and minimize the $\cal{F}$-distance. Since the recent work \cite{dann_ca} and the proposed DADA unify the task and domain classifiers into an integrated one, i.e., conditioning the domain classifier on the task classifier, they aim to measure and minimize the ${\cal{F}}_{\cal{H}}$-distance. The ${\cal{F}}_{\cal{H}}$-distance can be upper bounded by the optimal solution of the integrated domain and task classifier $F(\cdot)$. 
In the meanwhile, the upper bound of ${\cal{F}}_{\cal{H}}$-distance is minimized by learning a domain-invariant feature extractor $G(\cdot)$. 

Furthermore, our proposed DADA can be intuitively formalized as category-regularized domain-adversarial training, since our proposed discriminative adversarial training can learn an integrated classifier $F(\cdot)$ that has explicit intra-domain discrimination and inter-domain indistinguishability, which may enable a better performed ideal joint hypothesis $C^*$. Consequently, the expected target error $\epsilon_t(C)$ can be better approximated by the expected source error $\epsilon_s(C)$. As verified above, our proposed DADA can formally better bound the expected target error than existing domain adaptation methods.

\section{Additional Results and Analysis}
\label{sec4}
\subsection{Datasets}
\noindent\textbf{Digits} datasets of MNIST \cite{mnist}, Street View House Numbers (SVHN) \cite{svhn}, and USPS \cite{usps} are popular. we follow ADR \cite{adr} and evaluate on three adaptation settings of \textbf{SVHN}$\rightarrow$\textbf{MNIST}, \textbf{MNIST}$\rightarrow$\textbf{USPS}, and \textbf{USPS}$\rightarrow$\textbf{MNIST}. For all adaptation settings, we adopt the same network architecture and experimental setting as ADR.

\noindent\textbf{Office-31} \cite{office31} is a benchmark domain adaptation dataset as introduced in Section \textbf{Datasets and Implementation Details} in the paper. For partial domain adaptation, we select images of $10$ categories shared by Office-31 and Caltech-256 \cite{caltech256} in each domain of Office-31 as the target domain. Note that the source domain here contains $31$ categories and the target domain here contains $10$ categories. For open set domain adaptation, we use the selected $10$ categories as the known categories. In alphabetical order, $11-20$ categories and $21-31$ categories are used as the unknown categories in the source and target domains respectively. In this setting, an $11$-category classification is performed. 

\noindent\textbf{Office-Home} \cite{office_home} is a much more challenging benchmark dataset for domain adaptation, which includes $15,500$ images of $65$ object categories in office and home scenes, shared by four extremely distinct domains: Artistic images (\textbf{Ar}), Clip Art (\textbf{Cl}), Product images (\textbf{Pr}), and Real-World images (\textbf{Rw}). We build $12$ adaptation settings: \textbf{Ar} $\rightarrow$ \textbf{Cl}, \textbf{Ar} $\rightarrow$ \textbf{Pr}, \textbf{Ar} $\rightarrow$ \textbf{Rw}, \textbf{Cl} $\rightarrow$ \textbf{Ar}, \textbf{Cl} $\rightarrow$ \textbf{Pr}, \textbf{Cl} $\rightarrow$ \textbf{Rw}, \textbf{Pr} $\rightarrow$ \textbf{Ar}, \textbf{Pr} $\rightarrow$ \textbf{Cl}, \textbf{Pr} $\rightarrow$ \textbf{Rw}, \textbf{Rw} $\rightarrow$ \textbf{Ar}, \textbf{Rw} $\rightarrow$ \textbf{Cl}, \textbf{Rw} $\rightarrow$ \textbf{Pr}. For partial domain adaptation, we choose images of the first $25$ categories (in alphabetical order) in each domain of this dataset as target domains. Note that each source domain here contains $65$ categories and each target domain here contains $25$ categories.

\noindent\textbf{ImageNet-Caltech} is built from ImageNet \cite{imagenet} that contains $1000$ categories, and Caltech-256 \cite{caltech256} that contains $256$ categories. They share $84$ common categories, thus we construct two adaptation settings: \textbf{I (1000)} $\rightarrow$ \textbf{C (84)}, and \textbf{C (256)} $\rightarrow$ \textbf{I (84)}. When ImageNet is used as the source domain, we use its training set; when it is used as the target domain, we use its validation set to prevent the model from the effect of pre-training on its training set. 

\subsection{Closed Set Domain Adaptation. }
\subsubsection{Effect of the $\lambda$}
We provide the empirical evidence on Office-31 \cite{office31} based on ResNet-50 \cite{resnet} for the effect of the hyper-parameter $\lambda$ on keeping the source instances satisfying the condition of $p_{y^s}^s>0.5$ (cf. Section \textbf{Discriminative Adversarial Learning} in the paper for its derivation) in the early stage of adversarial training of DADA in Figure \ref{fig:verify_lambda}, which shows that the rate of source instances failing to satisfy the condition rises rapidly in the early stage of adversarial training when the $\lambda$ is not used.

\begin{table*}[t]
	\caption{Comparison on Office-31 based on ResNet-50 with an alternative choice of adversarial loss for target instances. Please refer to the main text for how this alternative is defined.}
	\label{table:other_variants_office31}
	\begin{center}
		\begin{tabular}{lccccccc}
			\hline
			Methods     & A $\rightarrow$ W & D $\rightarrow$ W & W $\rightarrow$ D & A $\rightarrow$ D & D $\rightarrow$ A & W $\rightarrow$ A & Avg \\
			\hline
			
			DADA-DC     & 90.4$\pm$0.1 & 98.7$\pm$0.1 & \textbf{100.0}$\pm$0.0 & 92.5$\pm$0.3 & 72.5$\pm$0.2 & 73.0$\pm$0.3 & 87.9 \\
			
			DADA        & \textbf{92.3}$\pm$0.1 & \textbf{99.2}$\pm$0.1 & \textbf{100.0}$\pm$0.0 & \textbf{93.9}$\pm$0.2 & \textbf{74.4}$\pm$0.1 & \textbf{74.2}$\pm$0.1 & \textbf{89.0} \\
			
			\hline
		\end{tabular}
	\end{center}
\end{table*}

\begin{table*}
	\caption{Analysis of robustness for different methods on benchmark datasets of MNIST \cite{mnist}, SVHN \cite{svhn}, and USPS \cite{usps} based on modified LeNet.}
	\label{table:results_digits} 
	\begin{center}
		\begin{tabular}{lcccc}
			\hline
			Methods                 & SVHN $\rightarrow$ MNIST & MNIST $\rightarrow$ USPS & USPS $\rightarrow$ MNIST & Avg \\
			\hline
			No Adaptation           & 67.1 & 77.0 & 68.1 & 70.7 \\ 
			
			DDC \cite{ddc}          & 68.1$\pm$0.3 & 79.1$\pm$0.5 & 66.5$\pm$3.3 & 71.2 \\ 
			
			DANN \cite{dann}        & 73.9 & 77.1$\pm$1.8 & 73.0$\pm$0.2 & 74.7 \\ 
			
			DRCN \cite{drcn}        & 82.0$\pm$0.1 & 91.8$\pm$0.09 & 73.7$\pm$0.04 & 82.5 \\ 
			
			ADDA \cite{adda}        & 76.0$\pm$1.8 & 89.4$\pm$0.2 & 90.1$\pm$0.8 & 85.2 \\ 
			
			SBADA-GAN \cite{sbada_gan} & 76.1 & \textbf{97.6} & 95.0 & 89.6 \\ 
			
			RAAN \cite{raan}        & 89.2 & 89.0 & 92.1 & 90.1 \\ 
			
			ADR \cite{adr}          & 94.1$\pm$1.37 & 91.3$\pm$0.65 & 91.5$\pm$3.61 & 92.3 \\ 
			
			TPN \cite{tpn}          & 93.0 & 92.1 & 94.1 & 93.1 \\ 
			
			CyCADA \cite{cycada}    & 90.4$\pm$0.4 & 95.6$\pm$0.2 & 96.5$\pm$0.1 & 94.2 \\ 
			
			MCD \cite{mcd}          & \textbf{96.2}$\pm$0.4 & 94.2$\pm$0.7 & 94.1$\pm$0.3 & 94.8 \\ 
			
			CADA \cite{cons_ada}        & 90.9$\pm$0.2 & 96.4$\pm$0.1 & \textbf{97.0}$\pm$0.1 & 94.8 \\ 
			
			DAN \cite{dan}          & 71.1 & - & - & - \\ 
			
			CoGAN \cite{cogan}      & - & 91.2$\pm$0.8 & 89.1$\pm$0.8 & - \\ 
			
			DSN \cite{dsn}         & 82.7 & 91.3 & - & - \\ 
			
			LDC \cite{layerWiseCorrection} & 89.5$\pm$2.1 & - & - & - \\ 
			
			MSTN \cite{mstn}        & 91.7$\pm$1.5 & 92.9$\pm$1.1 & - & - \\ 
			
			PFAN \cite{pfan}        & 93.9$\pm$0.8 & 95.0$\pm$1.3 & - & - \\ 
			
			JDDA-C \cite{jdda}      & 94.2$\pm$0.1 & - & 96.7$\pm$0.1 & - \\
			
			MECA \cite{meca}        & 95.2 & - & - & - \\
			
			ASSC \cite{associativeDA}        & 95.7$\pm$1.5 & - & - & - \\ 
			
			\hline
			\textbf{DADA}           & 95.6 $\pm$ 0.5 & 96.1$\pm$0.4 & 96.5$\pm$0.2 & \textbf{96.1} \\
			\hline
		\end{tabular} 
	\end{center}
\end{table*}

\begin{figure}[t]
	\begin{center} 
		
		\subfigure[\textbf{A}$\rightarrow$\textbf{D}]{\includegraphics[width=1.0\linewidth]{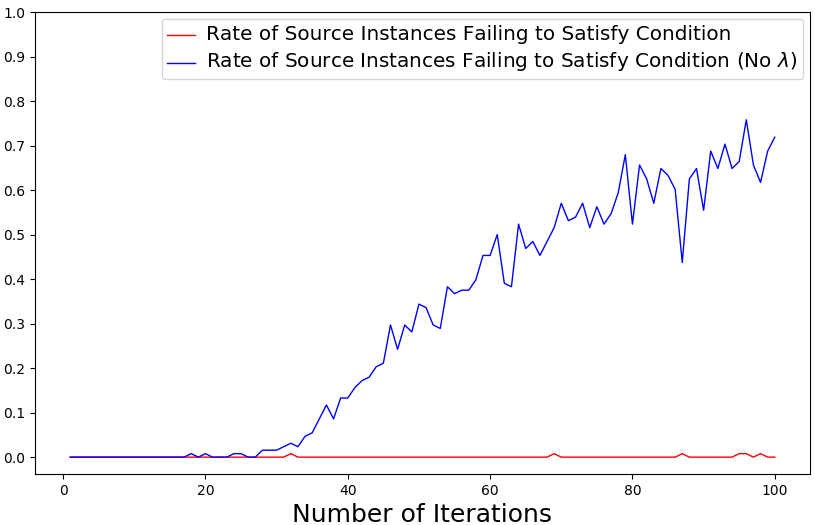}}
		\subfigure[\textbf{D}$\rightarrow$\textbf{A}]{\includegraphics[width=1.0\linewidth]{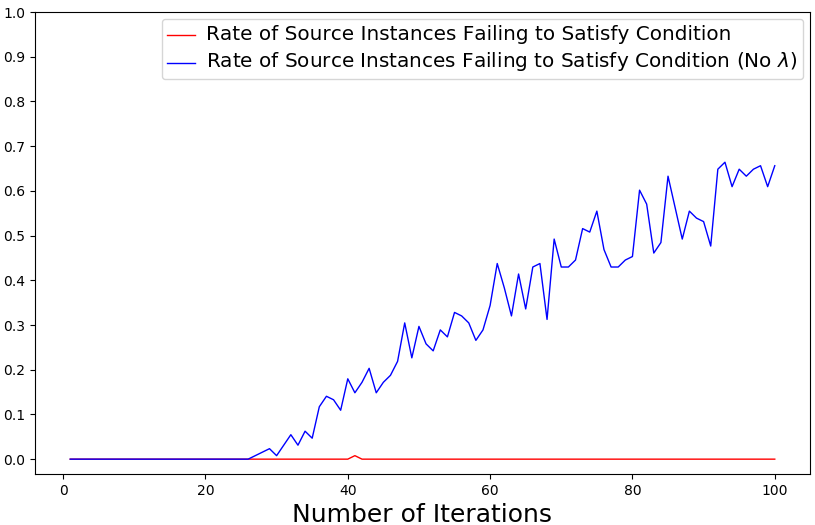}}
	\end{center}
	\caption{An illustration for the effect of the $\lambda$ on the rate of source instances failing to satisfy the condition in the early stage (e.g., the first $100$ iterations) of adversarial training on the two adaptation settings of (a) \textbf{A}$\rightarrow$\textbf{D} and (b) \textbf{D}$\rightarrow$\textbf{A}.}
	\label{fig:verify_lambda}
\end{figure}

\subsubsection{Alternative Choice of Adversarial Loss for Target Instances}
\label{sec7}
For a target adversarial loss, when maximized over the feature extractor $G(\cdot)$, we have an alternative choice. In this section, we give further discussion and experiments to compare our used ${\cal{L}}_G^t$ in loss (4) in the paper with this alternative. 

\begin{figure}[!ht]
	\begin{center} 
		\includegraphics[width=1.0\linewidth]{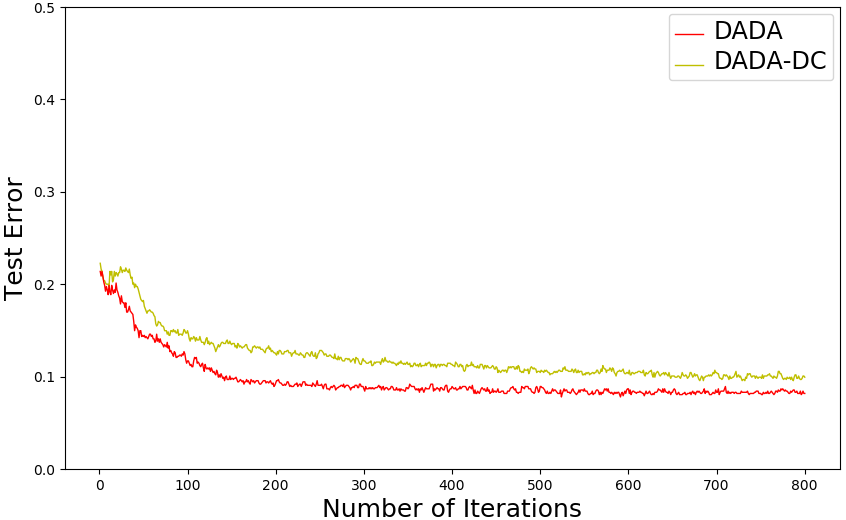}
	\end{center}
	\caption{Convergence performance in terms of test error on the adaptation setting \textbf{A} $\rightarrow$ \textbf{W}. Note that here we only show the convergence performance during adversarial training.}
	\label{fig:convergence_fg_variants}
\end{figure}

\begin{figure*}[ht]
	\includegraphics[scale=0.28]{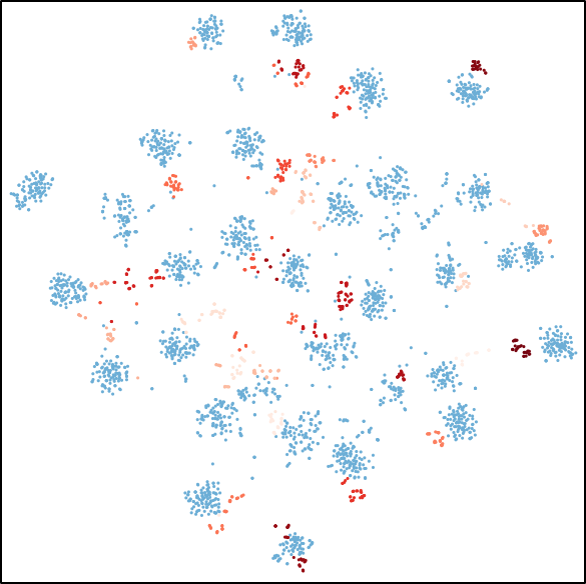} \hfill  \includegraphics[scale=0.28]{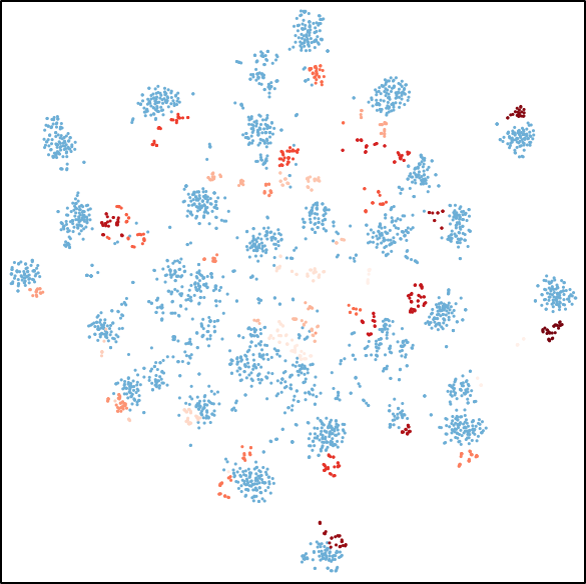} \hfill \includegraphics[scale=0.28]{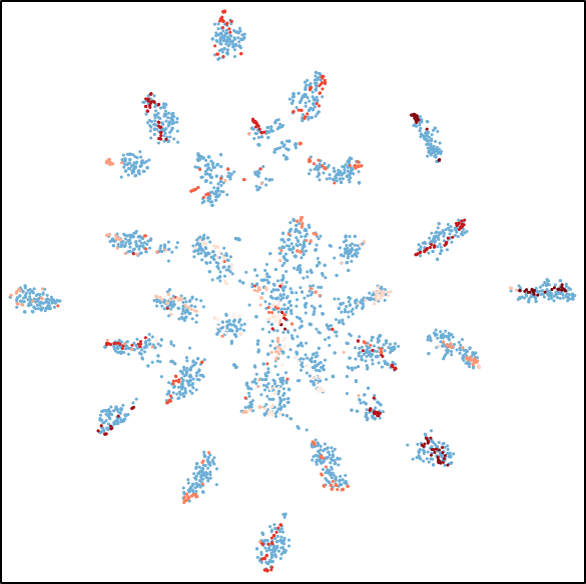} \hfill  \includegraphics[scale=0.28]{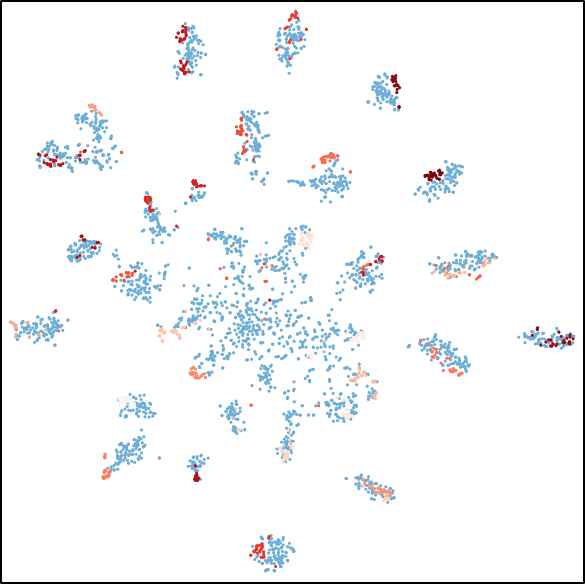} \\
	\caption{ The t-SNE visualization of feature alignment between the source (blue) and target (red) domains by No Adaptation, DANN, DANN-CA, and DADA (from left to right). Samples of plotting are from the adaptation setting of \textbf{A} $\rightarrow$ \textbf{W} in Table 1 in the paper. Note that different degrees of the red color indicate different target categories.}  
	\label{fig:t_sne}  
\end{figure*}

Inspired by the works \cite{simultaneous_transfer,symnets}, one may opt for a symmetric adversarial loss
\begin{eqnarray}\label{EqnAdversarialConfusionLossOnTarget}
\begin{aligned}
{\cal{L}}_{G}^t (G, F) =  \frac{1}{n_t} \sum_{j=1}^{n_t} \sum_{k=1}^K \bar{p}_k(\mathbf{x}_j^t)[&\frac{1}{2} \log p_{K+1}(\mathbf{x}_j^t) + \\ &\frac{1}{2}\! \log(1\!-\!p_{K+1}(\mathbf{x}_j^t))] ,
\end{aligned}
\end{eqnarray}
which when maximized over $G(\cdot)$, gives a confused prediction of $p_{K+1}(\mathbf{x}^t) = 0.5$. This result does not give category prediction $p_{y^t}(\mathbf{x}^t)$ on the \emph{unknown} true category $y^t$ of a target instance $\mathbf{x}^t$ a chance to approach $1$. Thus, this alternative choice is sub-optimal. 

In contrast, our used ${\cal{L}}_G^t$ in loss (4) in the paper gives a prediction of $p_{K+1}(\mathbf{x}^t) = 0$ when maximized over $G(\cdot)$. This result gives $p_{y^t}(\mathbf{x}^t)$ a better chance to approach $1$, i.e. $\bar{p}_{y^t}(\mathbf{x}^t)$ is more likely to approach $1$. In other words, the target data are more likely to be correctly classified, which is enabled by our proposed \emph{mutually inhibitory relation between the category and domain predictions}. 

To compare the effectiveness of our used ${\cal{L}}_G^t$ in loss (4) in the paper and this alternative choice, we conduct experiments on Office-31 \cite{office31} based on ResNet-50 \cite{resnet}, by replacing ${\cal{L}}_G^t$ in loss (4) in the paper with the domain confusion loss (\ref{EqnAdversarialConfusionLossOnTarget}) in our main minimax probem (7) in the paper. 
We denote these this alternative as ``DADA-DC". Results in Table \ref{table:other_variants_office31} and convergence performances in Figure \ref{fig:convergence_fg_variants} show advantages of our used ${\cal{L}}_G^t$ in loss (4) in the paper.

\subsubsection{Feature Visualization}
To visualize how different methods are effective at aligning learned features on the source and target domains, we use t-SNE embeddings \cite{t_sne} to plot the output activations from the feature extractors of ``No Adaptation'', DANN, DANN-CA, and DADA. Figure \ref{fig:t_sne} gives the plotting, where samples are from the adaptation setting of \textbf{A} $\rightarrow$ \textbf{W} of Office-31 \cite{office31} based on ResNet-50 \cite{resnet}. Figure \ref{fig:t_sne} shows qualitative improvements of these methods at aligning features across data domains, i.e., the distribution of target samples (red) changes from the scattered state of DANN to multiple category-wise clusters of DADA, which are aligned with source samples (blue) of corresponding categories. Note that for the source domain, since we aim to achieve a balance between the transferability and discriminability \cite{tat}, the categories are not perfectly separated. Since the transferability is enhanced, the target categories are well separated.

\subsubsection{Digits}
To validate the robustness of our proposed DADA, we evaluate different methods on Digits datasets of MNIST \cite{mnist}, SVHN \cite{svhn}, and USPS \cite{usps} based on modified LeNet in Table \ref{table:results_digits}. Note that results of existing methods are quoted from their respective papers or the recent works \cite{adr,mcd}. We follow these methods and report accuracies on the target test data in the format of $\texttt{mean}\pm\texttt{std}$ over five random trials. Our proposed DADA consistently achieves a good result on different adaptation settings, showing its excellent robustness.

\begin{table*}[!htb]
	\begin{center}
		\caption{Results for partial domain adaptation on Office-31 based on ResNet-50.}
		\label{table:results_office31_partial_transfer}
		\begin{tabular}{lccccccc}
			\hline
			Methods  & A $\rightarrow$ W & D $\rightarrow$ W & W $\rightarrow$ D & A $\rightarrow$ D & D $\rightarrow$ A & W $\rightarrow$ A & Avg \\
			\hline
			No Adaptation \cite{resnet} & 54.52 & 94.57 & 94.27 & 65.61 & 73.17 & 71.71 & 75.64 \\ 
			
			DAN \cite{dan}       & 46.44 & 53.56 & 58.60 & 42.68 & 65.66 & 65.34 & 55.38 \\ 
			
			DANN \cite{dann}     & 41.35 & 46.78 & 38.85 & 41.36 & 41.34 & 44.68 & 42.39 \\ 
			
			ADDA \cite{adda}     & 43.65 & 46.48 & 40.12 & 43.66 & 42.76 & 45.95 & 43.77 \\ 
			
			RTN \cite{rtn}       & 75.25 & 97.12 & 98.32 & 66.88 & 85.59 & 85.70 & 84.81 \\ 
			
			JAN \cite{jan}       & 43.39 & 53.56 & 41.40 & 35.67 & 51.04 & 51.57 & 46.11 \\ 
			
			Luo \emph{et al.} \cite{lel}   & 73.22 & 93.90 & 96.82 & 76.43 & 83.62 & 84.76 & 84.79 \\ 
			
			PADA \cite{pada}     & 86.54 & 99.32 & \textbf{100.00} & 82.17 & 92.69 & \textbf{95.41} & 92.69 \\ 
			\hline
			\textbf{DADA-P}    & \textbf{90.73} & \textbf{100.00} & \textbf{100.00} & \textbf{87.90} & \textbf{94.71} & 94.89 & \textbf{94.71} \\ 
			\hline
		\end{tabular}
	\end{center}
\end{table*}

\subsection{Partial Domain Adaptation. }

For each partial adaptation setting of Office-31, Office-Home, and ImageNet-Caltech, we follow the work \cite{pada} to report the mean classification result on the target domain over three random trials.

\subsubsection{Office-31}
We compare in Table \ref{table:results_office31_partial_transfer} our proposed method with existing ones on Office-31 based on ResNet-50 \cite{resnet} pre-trained on ImageNet \cite{imagenet}. Results of existing methods are quoted from PADA \cite{pada}. Our proposed DADA-P outperforms all comparative methods by a large margin, showing the effectiveness of the adopted category-level weighting mechanism on reducing the negative influence of source outliers on adaptation settings with small source domain and small target domain, e.g., \textbf{A} $\rightarrow$ \textbf{W}. Although PADA uses the same weighting mechanism, it performs much worse than our proposed DADA-P, suggesting the effectiveness of DADA-P on enhancing the positive influence of shared categories.

From the experimental results, several interesting observations can be derived. \textbf{(1)} Previous deep domain adaptation methods including those based on domain-adversarial training (e.g., DANN) and those based on MMD (e.g., DAN) perform much worse than the very baseline ``No Adaptation'', showing the huge impact of negative transfer. Domain-adversarial training based methods aim to learn domain-invariant intermediate features to deceive the domain classifier, and MMD based methods aim to minimize the discrepancy between data distributions of the source and target domains. Both of them align the whole source domain to the whole target one. However, in partial domain adaptation, since the source domain contains categories that do not exist in the target domain, i.e., outlier source categories, they will suffer false alignment between the outlier source categories and the target
domain. This explains their poor performance in partial domain adaptation. \textbf{(2)} Among previous deep domain adaptation methods, RTN is the only one that performs better than ``No Adaptation''. RTN exploits the entropy minimization principle \cite{em} to encourage the low-density separation of target categories. Its target task classifier directly has access to the unlabeled target data and can amend itself to pass through the target low-density regions where the outlier source categories may exist, which alleviate the negative influence of source outliers. Nevertheless, PADA, which does not use the entropy minimization principle but a category-level weighting mechanism, performs much better than RTN, demonstrating that RTN still suffers negative transfer and may be not able to bridge such a large domain discrepancy caused by different label spaces. \textbf{(3)} Although our proposed DADA-P applies the same weighting mechanism as PADA, it performs much better than PADA. PADA has a separate design of task and domain classifiers and only aims to align marginal feature distributions, whereas our proposed DADA-P based on an integrated domain and task classifier, aims to promote the joint distribution alignment across domains. This explains the good performance of our proposed method in partial domain adaptation.

\begin{figure}[t]
	\begin{center}
		\includegraphics[width=1.0\linewidth]{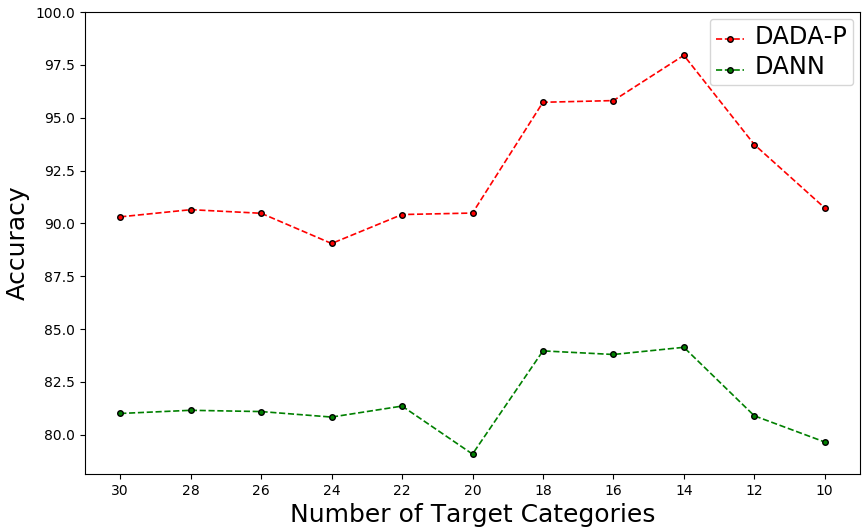}
	\end{center}
	\caption{The accuracy curve of varying the number of target categories for the baseline DANN \cite{dann} and our proposed DADA-P on the partial adaptation setting \textbf{A} $\rightarrow$ \textbf{W} of Office-31 with a base network of ResNet-50.}
	\label{fig:partial_vary_tarC}
\end{figure}

\begin{figure}[t]
	\begin{center}
		\includegraphics[width=1.0\linewidth]{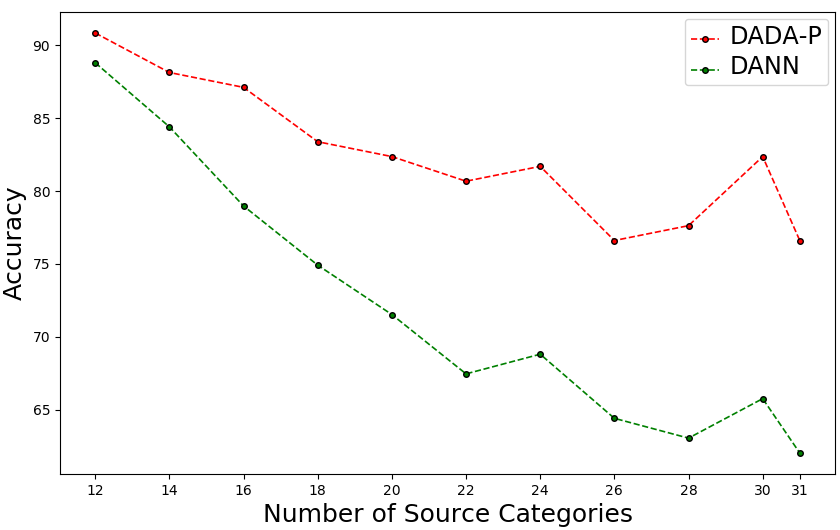}
	\end{center}
	\caption{The accuracy curve of varying the number of source categories for the baseline DANN \cite{dann} and our proposed DADA-P on the partial adaptation setting \textbf{A} $\rightarrow$ \textbf{W} of Office-31 with a base network of AlexNet.}
	\label{fig:partial_vary_srcC}
\end{figure}

\begin{table*}[!htb]
	\begin{center}
		\caption{Results for partial domain adaptation on Office-31 based on AlexNet.}
		\label{table:results_office31_partial_transfer_alexnet}
		\resizebox{1.0\textwidth}{!}{
		\begin{tabular}{lccccccc}
			\hline
			Methods  & A $\rightarrow$ W & D $\rightarrow$ W & W $\rightarrow$ D & A $\rightarrow$ D & D $\rightarrow$ A & W $\rightarrow$ A & Avg \\
			\hline
			No Adaptation \cite{alexnet} & 58.51 & 95.05 & 98.08 & 71.23 & 70.60 & 67.74 & 76.87 \\ 
			
			DAN \cite{dan}               & 56.52 & 71.86 & 86.78 & 51.86 & 50.42 & 52.29 & 61.62 \\ 
			
			DANN \cite{dann}             & 49.49 & 93.55 & 90.44 & 49.68 & 46.72 & 48.81 & 63.11 \\ 
			
			ADDA \cite{adda}             & 70.68 & 96.44 & 98.65 & 72.90 & 74.26 & 75.56 & 81.42 \\ 
			
			RTN \cite{rtn}               & 66.78 & 86.77 & 99.36 & 70.06 & 73.52 & 76.41 & 78.82 \\ 
			
			SAN \cite{san}               & \textbf{80.02} & 98.64 & \textbf{100.00} & 81.28 & 80.58 & 83.09 & 87.27 \\ 
			
			Zhang \emph{et al.} \cite{iwan} & 76.27 & \textbf{98.98} & \textbf{100.00} & 78.98 & 89.46 & 81.73 & 87.57 \\ 
			\hline
			\textbf{DADA-P}    & 76.61 & \textbf{98.98} & \textbf{100.00} & \textbf{85.56} & \textbf{93.81} & \textbf{93.28} & \textbf{91.37} \\ 
			\hline
		\end{tabular}
	}
	\end{center}
\end{table*}

To investigate a wider spectrum of partial domain adaptation, we conduct experiments by varying the number of target categories. Figure \ref{fig:partial_vary_tarC} shows results for the baseline DANN \cite{dann} and our proposed DADA-P on the partial adaptation setting \textbf{A} $\rightarrow$ \textbf{W} of Office-31 with a base network of ResNet-50.  The source domain has always $31$ categories, but the number of target categories varies from $30$ to $10$, i.e., $\{30, 28, 26, 24, 22, 20, 18, 16, 14, 12, 10\}$. As the number of target categories decreases, performances of the two methods have no evident decline in spite of the aggravation of negative transfer effect, since the difficulty of domain adaptation problem itself becomes smaller. We observe a sharp rise and a dramatic drop when the number of target categories decreases from $20$ to $18$ and from $14$ to $12$ respectively. One explanation is that the positive influence incurred by reducing the difficulty of domain adaptation problem itself is more (for the former observation) or less (for the latter one) than the negative influence caused by increasing the domain discrepancy. The results show that our proposed DADA-P performs much better than DANN in all settings. It is noteworthy that the relative performance improvement becomes larger when the number of target categories decreases, testifying the superiority of our methods in reducing the influence of negative transfer. Thus, given a source domain, our methods can perform much better when applied to the target domain with unknown number of categories.

\begin{table*}[t]
	\begin{center}
		\caption{Results for partial domain adaptation on Office-Home based on ResNet-50.}
		\label{table:results_officehome_partial_transfer}
		\resizebox{1.0\textwidth}{!}{
		\begin{tabular}{lccccccccccccc}
			\hline
			Methods  & Ar$\rightarrow$Cl & Ar$\rightarrow$Pr & Ar$\rightarrow$Rw & Cl$\rightarrow$Ar & Cl$\rightarrow$Pr & Cl$\rightarrow$Rw & Pr$\rightarrow$Ar & Pr$\rightarrow$Cl & Pr$\rightarrow$Rw & Rw$\rightarrow$Ar & Rw$\rightarrow$Cl & Rw$\rightarrow$Pr    & Avg  \\
			\hline
			No Adaptation \cite{resnet} & 38.57 & 60.78 & 75.21 & 39.94 & 48.12 & 52.90 & 49.68 & 30.91 & 70.79 & 65.38 & 41.79 & 70.42 & 53.71 \\
			
			DAN \cite{dan}       & 44.36 & 61.79 & 74.49 & 41.78 & 45.21 & 54.11 & 46.92 & 38.14 & 68.42 & 64.37 & 45.37 & 68.85 & 54.48 \\
			
			DANN \cite{dann}     & 44.89 & 54.06 & 68.97 & 36.27 & 34.34 & 45.22 & 44.08 & 38.03 & 68.69 & 52.98 & 34.68 & 46.50 & 47.39 \\
			
			RTN \cite{rtn}       & 49.37 & 64.33 & 76.19 & 47.56 & 51.74 & 57.67 & 50.38 & 41.45 & 75.53 & 70.17 & 51.82 & 74.78 & 59.25 \\  
			
			PADA \cite{pada}     & 51.95 & 67.00 & 78.74 & 52.16 & 53.78 & 59.03 & 52.61 & 43.22 & 78.79 & 73.73 & \textbf{56.60} & 77.09 & 62.06 \\
			\hline
			\textbf{DADA-P}    & \textbf{52.92} & \textbf{82.54} & \textbf{86.78} & \textbf{71.23} & \textbf{69.75} & \textbf{76.72} & \textbf{73.06} & \textbf{52.84} & \textbf{85.90} & \textbf{77.69} & 56.50 & \textbf{85.98} & \textbf{72.66} \\
			\hline
		\end{tabular}
	}
	\end{center}
\end{table*}

\begin{table}[t]
	\begin{center}
		\caption{Results for partial domain adaptation on ImageNet-Caltech based on ResNet-50.}
		\label{table:results_imagenet_caltech_partial_transfer}
		\begin{tabular}{lccc}
			\hline
			Methods  & I$\rightarrow$C & C$\rightarrow$I & Avg  \\
			\hline
			No Adaptation \cite{resnet} & 71.65 & 66.14 & 68.90 \\
			
			DAN \cite{dan}              & 71.57 & 66.48 & 69.03  \\
			
			DANN \cite{dann}            & 68.67 & 52.97 & 60.82  \\
			
			RTN \cite{rtn}              & 72.24 & 68.33 & 70.29  \\  
			
			PADA \cite{pada}            & 75.03 & 70.48 & 72.76  \\
			\hline
			\textbf{DADA-P}           & \textbf{80.94} & \textbf{76.91} & \textbf{78.93} \\
			\hline
		\end{tabular}
	\end{center}
\end{table}

We compare in Table \ref{table:results_office31_partial_transfer_alexnet} our proposed method with existing ones on Office-31 based on AlexNet \cite{alexnet} pre-trained on ImageNet. Results of existing methods are quoted from their respective papers or SAN \cite{san}. Our proposed DADA-P achieves a much better result than all comparative methods, showing the efficacy of our methods with a shallower neuron network as the base network.

To investigate the influence of the number of outlier source categories on the performance, we conduct experiments by varying the number of source categories. Figure \ref{fig:partial_vary_srcC} shows results for the baseline DANN \cite{dann} and our proposed DADA-P on the partial adaptation setting \textbf{A} $\rightarrow$ \textbf{W} of Office-31 with a base network of AlexNet. The target domain has always $10$ categories, but the number of source categories varies from $12$ to $31$, i.e., $\{12, 14, 16, 18, 20, 22, 24, 26, 28, 30, 31\}$. As the number of source categories increases, performances of the two methods have evident decline but also some rises, e.g., when the number of source categories increases from $22$ to $24$ and from $28$ to $30$. One explanation is that the positive influence incurred by increasing discriminative information of categories, especially those related to the target domain, is more than the negative influence caused by increasing the domain discrepancy. The results show that our proposed DADA-P significantly outperforms DANN in all settings. Particularly, the relative performance improvement is larger when the number of source categories is larger, demonstrating that our methods are more robust to the number of outlier source categories. Thus, for a given target task, our methods can have a much better performance when utilizing different source tasks.

\begin{table*}[th]
	\begin{center}
		\caption{Results for open set domain adaptation on Office-31 based on AlexNet. Note that all methods do not use unknown source instances. \emph{OS*} indicates the mean classification result over known categories whereas \emph{OS} also includes the unknown category.}
		\label{table:results_office31_open_set_alexnet}
		\resizebox{1.0\textwidth}{!}{
		\begin{tabular}{lcccccccccccccc}
			\hline
			Methods  & \multicolumn{2}{c}{A $\rightarrow$ W} & \multicolumn{2}{c}{D $\rightarrow$ W} & \multicolumn{2}{c}{W $\rightarrow$ D} & \multicolumn{2}{c}{A $\rightarrow$ D} & \multicolumn{2}{c}{D $\rightarrow$ A} & \multicolumn{2}{c}{W $\rightarrow$ A} & \multicolumn{2}{c}{Avg} \\
			& OS & OS* & OS & OS* & OS & OS* & OS & OS* & OS & OS* & OS & OS* & OS & OS*  \\
			\hline
			No Adaptation \cite{alexnet} & 57.1 & 55.0 & 44.1 & 39.3 & 62.5 & 59.2 & 59.6 & 59.1 & 14.3 & 5.9 & 13.0 & 4.5 & 40.6 & 37.1  \\ 
			
			DAN \cite{dan}               & 41.5 & 36.2 & 34.4 & 28.4 & 62.0 & 58.5 & 47.8 & 44.3 & 9.9 & 0.9 & 11.5 & 2.7 & 34.5 & 28.5  \\ 
			
			DANN \cite{dann}             & 31.0 & 24.3 & 33.6 & 27.3 & 49.7 & 44.8 & 40.8 & 35.6 & 10.4 & 1.5 & 11.5 & 2.7 & 29.5 & 22.7  \\ 
			
			ATI-$\lambda$ \cite{openSetDA} & 65.3 & -  & 82.2 & -  & 92.7 & -  & 72.0 & -  & 66.4 & - & 71.6 & -  & 75.0 & -  \\
			
			AODA \cite{bp_for_os} & 70.1 & 69.1 & \textbf{94.4} & \textbf{94.6} & \textbf{96.8} & \textbf{96.9} & 76.6 & 76.4 & 62.5 & 62.3 & \textbf{82.3} & \textbf{82.2} & 80.4 & 80.2 \\ 
			\hline
			\textbf{DADA-O}    & \textbf{75.5} & \textbf{75.6} & 91.2 & 93.0 & 93.3 & 94.4 & \textbf{82.7} & \textbf{83.9} & \textbf{73.5} & \textbf{74.8} & 71.1 & 71.6 & \textbf{81.2} & \textbf{82.2} \\ 
			\hline
		\end{tabular}
	}
	\end{center}
\end{table*}

\subsubsection{Office-Home}
We compare in Table \ref{table:results_officehome_partial_transfer} our proposed method with existing ones on Office-Home based on ResNet-50. Results of existing methods are quoted from PADA \cite{pada}. Our proposed DADA-P significantly outperforms all comparative methods, showing the efficacy of DADA-P on adaptation settings with more categories in both the source and target domains and larger domain discrepancy between the two domains, e.g., \textbf{Cl} $\rightarrow$ \textbf{Rw}. 

\subsubsection{ImageNet-Caltech}
We compare in Table \ref{table:results_imagenet_caltech_partial_transfer} our proposed method with existing ones on ImageNet-Caltech based on ResNet-50. Results of existing methods are quoted from PADA \cite{pada}. Our proposed DADA-P outperforms all comparative methods by a large margin, showing the effectiveness of DADA-P on adaptation settings with large-scale source and target domains and a large number of categories in the two domains. 

\subsection{Open Set Domain Adaptation. }We compare in Table \ref{table:results_office31_open_set_alexnet} our proposed method with existing ones on Office-31 based on AlexNet \cite{alexnet} pre-trained on ImageNet \cite{imagenet}. Results of existing methods are quoted from AODA \cite{bp_for_os}. Our proposed DADA-O outperforms all comparative methods in both evaluation metrics of OS* and OS, showing the efficacy of DADA-O in both aligning distributions of the known instances across domains and identifying the unknown target instances as the unknown category for open set domain adaptation. 

From the experimental results, we have some interesting observations. \textbf{(1)} DAN and DANN perform much worse than ``No Adaptation''. DAN and DANN aim to align the whole marginal feature distributions across the source and target domains. If the target domain contains unknown instances, false alignment between the known source instances and unknown target ones will occur, resulting in a sharp drop of the classification performance. \textbf{(2)} DANN performs worse than DAN, since DANN is better at aligning marginal feature distributions across data domains, leading to more serious false alignment. \textbf{(3)} ATI-$\lambda$ and AODA can effectively reduce false alignment, since they have a good outlier rejection mechanism to recognize the unknown instances. \textbf{(4)} The results of all comparative methods on almost all adaptation settings are better in the evaluation metric OS than OS*, showing that many known target instances are classified as the unknown category. Since Open-set SVM is trained to detect outliers and the task classifier of AODA is trained to recognize all the target instances as the unknown category, they are inclined to classify the target instances as the unknown category. \textbf{(5)} For our proposed DADA-O, the results of all adaptation settings are better in the evaluation metric OS* than OS, since their classifiers are trained to classify all target instance as the unknown category with a small probability $q$, which can minimize the misclassification of the known target instances as the unknown category.

\begin{figure}[ht]
	\begin{center}
		\includegraphics[width=1.0\linewidth]{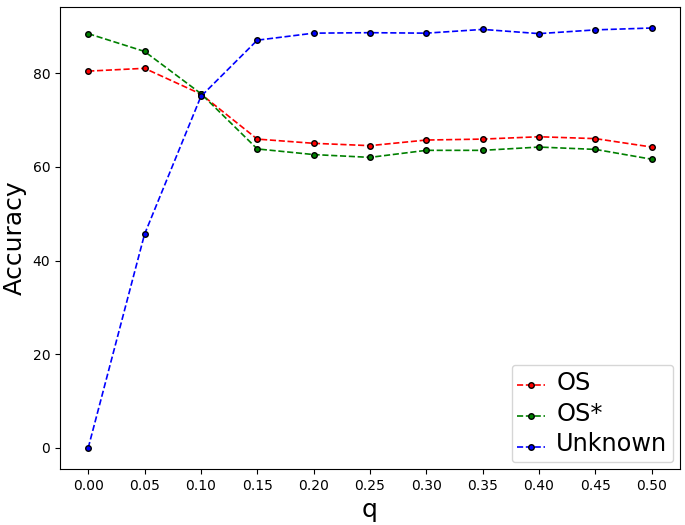}
	\end{center}
	\caption{The accuracy curve of varying $q$ for our proposed DADA-O on the open set adaptation setting \textbf{A} $\rightarrow$ \textbf{W} of Office-31 with a base network of AlexNet. The accuracy for unknown target instances is denoted by the blue line.}
	\label{fig:open_set_vary_t}
\end{figure}

\begin{figure*}[!t]
	\begin{center} 
		
		\includegraphics[width=0.85\linewidth]{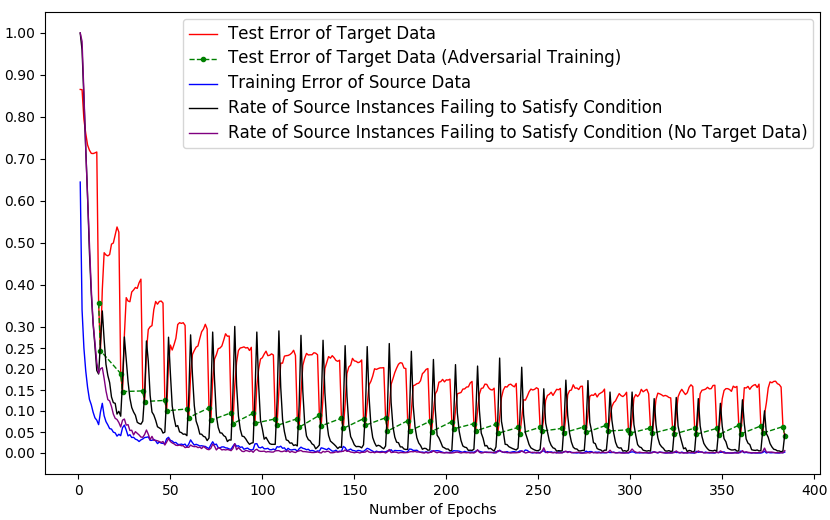} \\
		(a) \textbf{MNIST}$\rightarrow$\textbf{USPS} ($N_{alter}=32$) \\
		\includegraphics[width=0.85\linewidth]{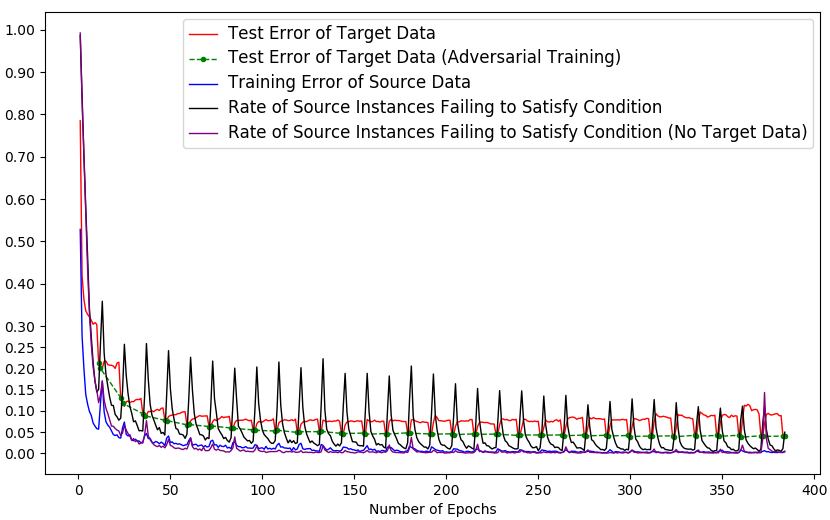} \\
		(b) \textbf{USPS}$\rightarrow$\textbf{MNIST} ($N_{alter}=32$) \\
	\end{center}
	\caption{Training processes in terms of the test error of the target data for each epoch, the test error of the target data for each epoch of adversarial training, the training error of the source data for each epoch, the rate of source instances failing to satisfy the condition for each epoch, and the rate of source instances failing to satisfy the condition for each epoch when no target data is used in the adversarial training, on the two adaptation settings of (a) \textbf{MNIST}$\rightarrow$\textbf{USPS} and (b) \textbf{USPS}$\rightarrow$\textbf{MNIST}.}
	\label{fig:training_process}
\end{figure*} 

To investigate the influence of $q$ on the performance, we conduct experiments by varying $q$. Figure \ref{fig:open_set_vary_t} shows results for our proposed DADA on the open set adaptation setting \textbf{A} $\rightarrow$ \textbf{W} of Office-31 with a base network of AlexNet. As $q$ increases, accuracies of OS and OS* decrease and the accuracy of Unknown increases, which means that the target instances are more likely classified as the unknown category. This confirms the statements we present in Section \textbf{Extension for Open Set Domain Adaptation} in the paper. When $q=0$, the objective of the feature extractor is to align the whole source domain and the whole target domain, resulting in the misclassification of all unknown target instances as the known categories, as illustrated in Figure \ref{fig:open_set_vary_t}. This demonstrates that the model does not learn feature representations that can separate the unknown target instances from the known instances. To make a trade-off, we empirically set $q=0.1$ for all open set adaptation settings.

\section{Investigation for Our Used Training Scheme} 
\label{sec5}
In this section, we investigate our used training scheme of pre-training DADA on the labeled source data and maintaining the same supervision signal in the adversarial training of DADA, on benchmark datasets of MNIST \cite{mnist} and USPS \cite{usps}, where two adaptation settings of \textbf{MNIST}$\rightarrow$\textbf{USPS} and \textbf{USPS}$\rightarrow$\textbf{MNIST} are built.

To always satisfy the condition of $p_{y^s}^s>0.5$ discussed in Section \textbf{Discriminative Adversarial Learning} in the paper, we train DADA of $F(G(\cdot))$ by a well-designed scheme, which can be formulated as alternating the classification training on the labeled source data and the adversarial training of DADA on the labeled source data and unlabeled target data. We denote the number of training epochs or training iterations for classification training in each alternation respectively as $T_{cls}$ and ${\hat{T}}_{cls}$, the number of training epochs or training iterations for adversarial training in each alternation respectively as $T_{adv}$ and ${\hat{T}}_{adv}$, and the number of alternating the classification training and adversarial training as $N_{alter}$. 
For the two adaptation settings of \textbf{MNIST}$\rightarrow$\textbf{USPS} and \textbf{USPS}$\rightarrow$\textbf{MNIST}, $T_{cls}$, $T_{adv}$, and $N_{alter}$ are respectively set to $10$, $2$, and $16$, according to the rate of source instances failing to satisfy the condition; the hyper-parameter $\lambda$ (cf. Section \textbf{Discriminative Adversarial Learning} in the paper for its definition) is not used, since $T_{adv}$ is a quite small number. We investigate the efficacy of our used training scheme on keeping the condition satisfied by visualizing training processes on the two adaptation settings in Figure \ref{fig:training_process}. 

From Figure \ref{fig:training_process}, we can obtain several interesting observations. (1) The classification training makes ``Rate of Source Instances Failing to Satisfy Condition'' fall into a valley whereas the adversarial training of DADA makes it rise to a peak, showing that a part of source instances change from satisfying the condition to not satisfying it during adversarial training. (2)  ``Rate of Source Instances Failing to Satisfy Condition (No Target Data)'' is much lower than ``Rate of Source Instances Failing to Satisfy Condition'' at epochs of adversarial training, showing that the training of target data affects the source data and results in that a part of them do not satisfy the condition. (3) ``Rate of Source Instances Failing to Satisfy Condition'' declines to a very low value in an oscillatory manner, showing the efficacy of this training scheme on keeping the condition satisfied. (4) ``Training Error of Source Data'' is low at epochs of adversarial training, showing that our proposed DADA has the same effect as classification training. (5) All valleys of ``Test Error of Target Data'' are derived from the adversarial training of DADA, showing the excellent efficacy of our proposed DADA in aligning the source and target domains. (6) At epochs of adversarial training, the lower ``Rate of Source Instances Failing to Satisfy Condition'' is, the more improvement of performance is obtained, showing the necessity of satisfying the condition. (6) The good performances of DADA on the two adaptation settings of \textbf{MNIST}$\rightarrow$\textbf{USPS} and \textbf{USPS}$\rightarrow$\textbf{MNIST}, which are very close to the perfect performance of $100\%$, confirm the efficacy of our proposed DADA in aligning the joint distributions of feature and category across the two domains.

For each closed set adaptation setting of Office-31, $T_{cls}$, ${\hat{T}}_{adv}$, and $N_{alter}$ are respectively set to $200$, $800$, and $1$. For the closed set adaptation setting of Syn2Real, ${\hat{T}}_{cls}$, ${\hat{T}}_{adv}$, and $N_{alter}$ are respectively set to $2000$, $1000$, $1$. For all these adaptation settings, the hyper-parameter $\lambda$ is used. 

\end{document}